\def\eqref#1{equation~\ref{#1}}
\def\1{\bm{1}}
\newcommand{\Rcal}{\mathcal{R}}
\newcommand{\iprod}[2]{\langle #1, #2 \rangle}
\newcommand{\order}[1]{O\left(#1\right)}
\newcommand{\abs}[1]{\left|#1\right|}
\newcommand{\ipbar}{\overline{ip}}
\newcommand{\expbar}{\overline{exp}}
\newcommand{\defeq}{:=}
\DeclareMathAlphabet{\mathsfit}{\encodingdefault}{\sfdefault}{m}{sl}
\SetMathAlphabet{\mathsfit}{bold}{\encodingdefault}{\sfdefault}{bx}{n}
\newcommand{\R}{\mathbb{R}}
\DeclareMathOperator*{\argmax}{arg\,max}
\DeclareMathOperator*{\argmin}{arg\,min}
\newcommand{\norm}[1]{\|#1\|}
\definecolor{LightGreen}{rgb}{0.8,1,0.8}
\newtheorem{theorem}{Theorem}
\newtheorem{defn}{Definition}
\newcommand{\redact}[1]{{\color{blue} #1}}
\title{Focus on the Common Good: Group Distributional Robustness Follows}
\author[1]{Vihari Piratla\thanks{\texttt{viharipiratla@gmail.com}}}
\author[2]{Praneeth Netrapalli}
\author[1]{Sunita Sarawagi}
\affil[1]{Indian Institute of Technology, Bombay}
\affil[2]{Google Research, India}
\newcommand*{\thead}[1]{\multicolumn{1}{|c|}{\bfseries #1}}
\newcommand{\dro}{Group-DRO}
\newcommand{\erm}{ERM}
\newcommand{\cg}{CGD}
\newcommand{\ermuw}{ERM-UW}
\newcommand{\pgi}{PGI}
\begin{document}

\maketitle

\begin{abstract}
We consider the problem of training a classification model with group annotated training data. Recent work has established that, if there is distribution shift across different groups, models trained using the standard empirical risk minimization (ERM) objective suffer from poor performance on minority groups and that group distributionally robust optimization (\dro) objective is a better alternative. 
The starting point of this paper is the observation that though {\dro} performs better than ERM on minority groups for \emph{some} benchmark datasets, there are \emph{several other} datasets where it performs much worse than ERM. 
Inspired by ideas from the closely related problem of domain generalization, this paper proposes a new and simple algorithm that explicitly encourages learning of features that are shared across various groups. The key insight behind our proposed algorithm is that while {\dro} 
focuses on groups with worst regularized loss, focusing instead, on groups that enable better performance even on other groups, could lead to learning of shared/common features, thereby enhancing minority performance beyond what is achieved by \dro.
Empirically, we show that our proposed algorithm matches or achieves better performance compared to strong contemporary baselines including ERM and Group-DRO
on standard benchmarks on both minority groups and across all groups. Theoretically, we show that the proposed algorithm is a descent method and finds first order stationary points of smooth nonconvex functions. Our code and datasets can be found at this \href{https://github.com/vihari/cgd}{URL}.

\end{abstract}

\section{Introduction}
\label{sec:intro}
It is by now well known~\citep{Sagawa19} that, in several settings with subpopulation shift, large neural networks trained using empirical risk minimization (ERM) objective heavily sacrifice prediction accuracy on small groups/subpopulations of the data, in order to achieve a high overall prediction accuracy. One of the key reasons for this behavior of ERM is the existence of spurious correlations between labels and some features on majority groups which are either nonexistent, or worse oppositely correlated, on the minority groups~\citep{Sagawa19}. In such cases, ERM exploits these spurious correlations to achieve high accuracy on the majority groups, thereby suffering from poor performance on minority groups. Consequently, this is a form of unfairness, where accuracy on minority subpopulation or group is being heavily sacrificed to achieve near perfect accuracy on majority groups. 




More concretely, we are given training examples,
stratified non-uniformly in to multiple sub-populations, also called groups.
Our goal is to train a model that generalizes to {\it all training  groups  without severely sacrificing accuracy on the minority groups}.
This problem, which is also known as the sub-population shift problem, was popularized by~\citet{Sagawa19},
and is well-studied in the literature~\citep{Sagawa19, SagawaExacerbate20, MenonOverparam21, DORO21, Goel20, Bao21}. Among all algorithms proposed for this problem, one of the most influential ones is the Group Distributionally Robust Optimization (\dro) method~\citep{Sagawa19}, which at every update step focuses on the group with the highest regularized loss. 
While \dro~has been shown to obtain reduction in worst-group error over ERM on \emph{some} benchmark datasets, it has obvious but practically relevant failure modes such as when different groups have different amounts of label noise. In such a case, \dro~ends up focusing on the group(s) with the highest amount of label noise, thereby obtaining worse performance on all groups. In fact, on \emph{several other} datasets with subpopulation shift, \dro~performs poorly compared to ERM~\citep{KohWilds20}. 

The key issue with \dro's strategy of just focusing on the group with the highest training loss is that, it is uninformed of the inter-group interactions. Consequently, the parameter update using the group with the highest training loss may increase the loss on other groups. 
In fact, this observation also reveals that {\dro} is not properly addressing the issue of spurious correlations, which have been identified by prior work as a key reason behind the poor performance of ERM~\citep{Sagawa19}.
Inspired by ideas from the closely related problem of \emph{domain generalization}~\citep{CORAL,PiratlaCG,ArjovskyIRM19}, we hypothesize that modeling inter-group interactions is essential for properly addressing spurious correlations and sub-population shift problem. More concretely, in each update step, we propose the following simple strategy to decide which group to train on:

{\it Train on that group whose gradient leads to largest decrease in average training loss over all groups.}

In other words, our strategy focuses on the group that contributes to the common good.
We call the resulting algorithm \emph{Common Gradient Descent} (\cg). 
We show that \cg~is a sound optimization algorithm as it monotonically decreases the macro/group-average loss, and consequently finds first order stationary points. \cg's emphasis on common gradients that lead to improvements across all groups, makes it robust to the presence of spurious features, enabling it to achieve better generalization across all groups compared to usual gradient descent.
We present insights on why \cg\ may perform better than \dro\ by studying simple synthetic settings. Subsequently, 
through empirical evaluation on seven real-world datasets--which include two text and five image tasks with a mix of sub-population and domain shifts, we demonstrate that {\cg} either matches or obtains improved performance over several existing algorithms for robustness that include: ERM, {\pgi}~\citep{Ahmed21}, IRM~\citep{ArjovskyIRM19}, and {\dro}. 
\section{Method}
\label{sec:method}
\paragraph{Problem statement}
\label{sec:problem}
Let $\mathcal{X}$ and $\mathcal{Y}$ denote the input and label spaces respectively.
We assume that the training data comprises of $k$ groups from a set $\mathcal{G}$ where each  $i \in \mathcal{G}$ include $n_i$ instances from a probability distribution $P_i(\mathcal{X}, \mathcal{Y})$. In addition to the label $y_j \in \mathcal{Y}$, each training example $x_j \in \mathcal{X}$ is also annotated by the group/subpopulation $i \in \mathcal{G}$ from which it comes. 
The group annotations are available only during training and not during testing time, hence the learned model is required to be group-agnostic, i.e., it may not use the group label at test time.
The number of examples could vary arbitrarily between different groups in the train data.
We refer to groups $\{i\}$ with (relatively) large $n_i$ as majority groups and those with small $n_i$ as minority groups. 
We use the terms group and sub-population interchangeably. 
%
%
Our goal is to learn a model that performs well on all the groups in $\mathcal{G}$.
Following prior work~\citep{Sagawa19}, we use two metrics to quantify performance: \emph{worst group accuracy} denoting the minimum test accuracy across all groups $i \in \mathcal{G}$ and \emph{average accuracy} denoting the average test accuracy across \emph{all examples belonging to all groups}. 

We denote with $\ell_i$, the  average loss over examples of group $i$ using a predictor $f_\theta$, $\ell_i(\theta)=\mathbb{E}_{(x, y)\sim P_i(\mathcal{X}, \mathcal{Y})}\mathcal{L}(x, y; f_\theta)$, for an appropriate classification loss $\mathcal{L}$, a parameterized predictor $f_\theta$ with parameter $\theta$. We refer to the parameters at step `t' of training by $\theta^t$.

The {\erm} training algorithm learns parameters over the joint risk, which is the population weighted sum of the losses: $\sum_i n_i\ell_i/\sum_i n_i$. However, as described in Section~\ref{sec:intro} when there are spurious correlations of certain features in the majority groups,
{\erm}'s loss cannot avoid learning of these spurious features. This leads to poor performance on minority groups if these features are either nonexistent or exhibit opposite behavior on the minority groups. A simple alternative, called {\ermuw}, is to reshape the risk function so that the minority groups
are up-weighted, for some predefined group-specific weights $\alpha$.
{\ermuw} while simple, however, could overfit to the up-weighted minority groups. \citet{Sagawa19} studied an existing robust optimization algorithm~\citep{duchi18} in the context of deep learning models. They proposed a risk reshaping method called {\dro}, which at any update step trains on the group with the highest loss, as shown below. 
\begin{align*}
\mbox{\dro~update step:} \quad j^* &= \argmax_j \ell_j(\theta) \quad \mbox{ and } \quad 
\theta^{t+1} = \theta^{t} - \eta\nabla_\theta\ell_{j^*}.
\end{align*}
Training on only high loss groups avoids overfitting on the minority group since we avoid zero training loss on any group while the average training loss is non-zero. However, this approach of focusing on the worst group is subject to failure when groups have heterogeneous levels of noise and transfer. In Section~\ref{sec:qua}, we will illustrate these failure modes. 

Instead, we consider an alternate strategy for picking the training group: the group when trained on, most minimizes the overall loss across {\emph{all}} groups. 
Let $g_i=\nabla_\theta \ell_i$ represent the gradient of the group $i$, we pick the desired group as:
\begin{align}
    j^*=&\argmin_j \sum_i \ell_i(\theta^t - \eta\nabla_\theta\ell_j(\theta^t))
    &\approx \argmax_j \sum_i g_i^Tg_j \quad \text{[First-order Taylor]} \label{eqn:firstorder}
\end{align}
%
%
The choice of the training group based on gradient inner-product can be noisy. To counteract, we smooth the choice of group with a weight vector $\alpha^t\in \Delta^{k-1}$, at step t, and also regularize between consecutive time steps. The amount of regularization is  controlled by a hyperparameter: $\eta_\alpha$.
Our parameter update at step $t+1$ takes the following form. 
%
\begin{align}
  \alpha^{t+1} &= \argmax_{\alpha \in \Delta^{k-1}} \sum_i \alpha_i \langle g_i, \sum_j g_j \rangle - \frac{1}{\eta_\alpha} KL(\alpha, \alpha^t)\label{eq:alpha_opt}\\ 
  \theta^{t+1} &= \theta^{t} - \eta\sum_j \alpha_j^{t+1}g_j(\theta_t). \nonumber
\end{align}
The update of $\alpha$ in \eqref{eq:alpha_opt} can be solved in closed form using KKT first order optimality conditions and rearranging to get:
\begin{align}
    \alpha^{t+1}_i = \frac{\alpha^t_i \cdot \exp\left(\eta_\alpha \langle g_i, \sum_j g_j \rangle \right)}{\sum_s \alpha^t_s \cdot \exp\left(\eta_\alpha \langle g_s, \sum_j g_j \rangle \right)}.
    \label{eqn:alpha_update}
\end{align}



Pseudocode for the overall algorithm is shown in Algorithm~\ref{alg:cg}.
\begin{algorithm}
\caption{{\cg} Algorithm}\label{alg:cg}
\begin{algorithmic}[1]
\State{{\bf Input:} Number of groups: $k$, Training data: $\left\{(x_j, y_j, i): i \in [k], j \in [n_i]\right\}$, Step sizes: $\eta_\alpha, \eta$}
\State{Initialize $\theta^0$, $\alpha^0 = \left(\frac{1}{k}, \cdots, \frac{1}{k}\right)$}
\For{$t=1,2,\cdots,$}
\For{$i \in \{1,\cdots,k\}$}
\State{$ \alpha_i^{t+1}\leftarrow \alpha_i^t \exp(\eta_\alpha \nabla \ell_i(\theta^t)^\top \sum_{s \in [k]} \nabla \ell_s(\theta^t))$} \label{alg:cg:loss} 
\EndFor
\State{$\alpha_i^{t+1} \leftarrow \alpha_i^{t+1}/\|\alpha^{t+1}\|_1 \quad\forall i\in [1\ldots k]$} \Comment{Normalize}
\State{$\theta^{t+1} \leftarrow \theta^t - \eta\sum_{i\in \{1,\cdots,k\}} \alpha_i^{t+1} \nabla \ell_i(\theta^t)$} \Comment{Update parameters} 
\EndFor
\end{algorithmic}
\end{algorithm}

The scale of the gradients can vary widely depending on the task, architecture and the loss landscape. As a result, the $\alpha$ update through ~\eqref{eqn:alpha_update} can be unstable and tuning of the $\eta_\alpha$ hyperparam tedious. Since we are only interested in capturing if a group transfers positively or negatively to others, we retain the cosine-similarity of the gradient dot-product but control their scale through $\ell(\theta)^p$ for some $p>0$.
That is, we set the gradient: $\nabla \ell_i(\theta^t)$ to $\frac{\nabla \ell_i(\theta^t)}{\norm{\nabla \ell_i(\theta^t)}}\ell_i(\theta)^p$. 
In our implementation, we use $p=1/2$. In Appendix~\ref{appendix:grad_approx}, we discuss in more detail the range of gradient norm, why we scale the gradient by loss, and how we pick the value of p.
Finally, the $\alpha^{t+1}$ update of~\eqref{eqn:alpha_update} is replaced with~\eqref{eqn:final_alpha_update}. When we assume the groups do not interact, i.e. $\langle g_i, g_j \rangle=0\quad\forall i\neq j$, then our update (\eqref{eqn:final_alpha_update}) matches {\dro}.
\begin{align}
  \alpha_i^{t+1} = \frac{\alpha_i^t\exp(\eta_\alpha\sum_j\sqrt{\ell_i\ell_j}\cos(g_i, g_j))}{\sum_s\alpha_s^t\exp(\eta_\alpha\sum_j\sqrt{\ell_s\ell_j}\cos(g_s, g_j))}.
  \label{eqn:final_alpha_update}
\end{align}
%
%
{\bf Group Adjustment}
The empirical train loss underestimates the true loss by an amount that is inversely proportional to the population size. Owing to the large group population differences, we expect varying generalization gap per group. \citet{Sagawa19} adjusts the loss value of the groups to correct for these generalization gap differences as  $   \ell_i = \ell_i + C/\sqrt{n_i}, \quad \forall i.$, where $C>0$ is a hyper-parameter. 
%
%
We apply similar 
group adjustments for {\cg} as well. The corrected loss values simply replace the $\ell_i$ of Line~\ref{alg:cg:loss} in Algorithm~\ref{alg:cg}.


\section{Convergence Analysis}
In this section, we will show that \cg~is a sound optimization algorithm
by proving that it monotonically decreases the function value and finds first order stationary points (FOSP) for bounded, Lipschitz and smooth loss functions.
We now define the notion of $\epsilon$-FOSP which is the most common notion of optimality for general smooth nonconvex functions.
\begin{defn}\label{defn:fosp}
A point $\theta$ is said to be an $\epsilon$-FOSP of a differentiable function $f(\cdot)$ if $\norm{\nabla f(\theta)}\leq \epsilon$.
\end{defn}
In the context of our paper, we consider the following cumulative loss function: $\Rcal(\theta)\defeq \frac{1}{k}\sum_i \ell_i(\theta)$. This is called the macro/group-average loss.
We are now ready to state the convergence guarantee for our algorithm.
\begin{theorem}\label{thm:main}
Suppose that (i) each $\ell_i(\cdot)$ is $G$-Lipschitz, (ii) $\Rcal(\cdot)$ is $L$-smooth and (iii) $\Rcal(\cdot)$ is bounded between $-B$ and $B$. Suppose further that Algorithm~\ref{alg:cg} is run with $\eta = 2 \sqrt{\frac{B}{LG^2T}}$ and $\eta_\alpha = \sqrt{\frac{BL}{G^6T}}$. Then, Algorithm~\ref{alg:cg} will find an $\epsilon$-FOSP of $\Rcal(\theta)$ in $\order{\frac{BLG^2}{\epsilon^4}}$ iterations.
\end{theorem}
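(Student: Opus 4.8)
The plan is to treat Algorithm~\ref{alg:cg} as a coupling of two dynamics—a smoothness-based descent on $\theta$ and an exponentiated-gradient (mirror-descent) update on $\alpha$—and to glue them together with an online-learning regret inequality. The starting observation is that $\nabla\Rcal(\theta^t)=\frac{1}{k}\sum_i g_i^t=\frac{1}{k}g^t$, where I write $g_i^t\defeq\nabla\ell_i(\theta^t)$ and $g^t\defeq\sum_i g_i^t$. The parameter step moves along $d^t\defeq\sum_j\alpha_j^{t+1}g_j^t$, which is \emph{not} $\nabla\Rcal$; however, writing $v_i^t\defeq\iprod{g_i^t}{g^t}$ one checks the identity $\iprod{\nabla\Rcal(\theta^t)}{d^t}=\frac1k\iprod{\alpha^{t+1}}{v^t}$, and that the $\alpha$-update in~\eqref{eq:alpha_opt} is exactly the mirror step $\alpha^{t+1}=\argmax_{\alpha\in\Delta^{k-1}}\eta_\alpha\iprod{\alpha}{v^t}-D_{\mathrm{KL}}(\alpha,\alpha^t)$. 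This is the conceptual bridge: choosing weights to maximize alignment with $g^t$ is online maximization of $\iprod{\alpha}{v^t}$.

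First I would apply the $L$-smoothness descent lemma to $\theta^{t+1}=\theta^t-\eta d^t$, obtaining
\begin{align*}
\Rcal(\theta^{t+1})\le \Rcal(\theta^t)-\tfrac{\eta}{k}\iprod{\alpha^{t+1}}{v^t}+\tfrac{L\eta^2}{2}\norm{d^t}^2,
\end{align*}
where $G$-Lipschitzness of each $\ell_i$ gives $\norm{g_i^t}\le G$ and hence, by convexity of $\norm{\cdot}^2$ on the simplex, $\norm{d^t}^2\le\sum_j\alpha_j^{t+1}\norm{g_j^t}^2\le G^2$, controlling the quadratic term. The crux is to lower bound $\iprod{\alpha^{t+1}}{v^t}$, for which I would invoke the three-point (prox) inequality for the KL mirror step: for every $u\in\Delta^{k-1}$,
\begin{align*}
\iprod{\alpha^{t+1}}{v^t}\ \ge\ \iprod{u}{v^t}-\tfrac{1}{\eta_\alpha}\big(D_{\mathrm{KL}}(u,\alpha^t)-D_{\mathrm{KL}}(u,\alpha^{t+1})\big),
\end{align*}
which follows from optimality of $\alpha^{t+1}$ after discarding the nonnegative term $D_{\mathrm{KL}}(\alpha^{t+1},\alpha^t)$. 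The decisive choice is $u=\bar\alpha\defeq(\tfrac1k,\dots,\tfrac1k)$, for which $\iprod{\bar\alpha}{v^t}=\frac1k\norm{g^t}^2=k\norm{\nabla\Rcal(\theta^t)}^2$; substituting back converts the descent inequality into $\Rcal(\theta^{t+1})\le\Rcal(\theta^t)-\eta\norm{\nabla\Rcal(\theta^t)}^2+\frac{\eta}{k\eta_\alpha}(D_{\mathrm{KL}}(\bar\alpha,\alpha^t)-D_{\mathrm{KL}}(\bar\alpha,\alpha^{t+1}))+\frac{L\eta^2G^2}{2}$.

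I would then sum over $t=0,\dots,T-1$. The KL terms telescope, leaving only $\frac{\eta}{k\eta_\alpha}D_{\mathrm{KL}}(\bar\alpha,\alpha^0)$, which vanishes because Algorithm~\ref{alg:cg} initializes $\alpha^0=\bar\alpha$ (and is at most $\frac{\eta\log k}{k\eta_\alpha}$ in general). Using $\Rcal(\theta^0)-\Rcal(\theta^T)\le 2B$ and dividing by $\eta T$ gives $\frac1T\sum_t\norm{\nabla\Rcal(\theta^t)}^2\le \frac{2B}{\eta T}+\frac{\log k}{k\eta_\alpha T}+\frac{L\eta G^2}{2}$. Plugging in $\eta=2\sqrt{B/(LG^2T)}$ balances the first and third terms at $\order{\sqrt{BLG^2/T}}$, while $\eta_\alpha=\sqrt{BL/(G^6T)}$ forces the regret term to the same $\order{1/\sqrt T}$ order; hence $\min_t\norm{\nabla\Rcal(\theta^t)}^2\le\order{\sqrt{BLG^2/T}}$, and requiring the right side to be $\le\epsilon^2$ yields the claimed $\order{BLG^2/\epsilon^4}$ iteration bound for an $\epsilon$-FOSP.

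The main obstacle is the coupling in the second paragraph: because the update direction is the adaptively weighted $d^t$ rather than $\nabla\Rcal(\theta^t)$, the step need not be a descent direction for $\Rcal$ at any single iteration, so the textbook gradient-descent guarantee cannot be quoted directly. The whole argument hinges on the comparator trick—showing that the mirror-descent iterate competes with the \emph{uniform} weighting, which is precisely the weighting that reconstructs $\nabla\Rcal$—and on verifying that the resulting regret penalty is genuinely lower order under the prescribed $\eta_\alpha$. A secondary subtlety needing care is that $v^t$ itself drifts with $t$ (the per-group losses are reweighted every step), so the relevant tool is the per-step prox inequality above rather than a static online-learning regret bound.
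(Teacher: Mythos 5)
Your proof is correct, and it shares the paper's overall skeleton — smoothness descent lemma on $\theta$, a mirror-descent analysis of the $\alpha$-update against the uniform comparator, telescoping KL terms, and the same step-size balancing — but the key lemma in the middle is genuinely different. The paper never invokes the three-point prox inequality: it expands $KL(\alpha^*,\alpha^{t+1})$ from the closed-form multiplicative update and bounds the log-partition term by hand via $\exp(x)\le 1+x+x^2$ (valid only when $\eta_\alpha\le 1/G^2$) and $\log(1+x)\le x$, which produces a lower bound on the \emph{old}-weight alignment $\sum_i\alpha_i^{t}\iprod{g_i}{g}$ at the cost of an additive $\eta_\alpha G^4$ error; it then needs a separate correlation step, $\sum_i\alpha_i^{t}\iprod{g_i}{g}\le\sum_i\alpha_i^{t+1}\iprod{g_i}{g}$, proved via $\mathrm{Cov}(X,\exp(X))\ge 0$, to transfer the bound to the new weights $\alpha^{t+1}$ that actually drive the parameter step. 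Your three-point inequality short-circuits both devices: since it retains $D_{\mathrm{KL}}(u,\alpha^{t+1})$ and discards only the nonnegative $D_{\mathrm{KL}}(\alpha^{t+1},\alpha^t)$, it lower-bounds $\iprod{\alpha^{t+1}}{v^t}$ directly, with no linearization error and no restriction on $\eta_\alpha$. Consequently your per-run bound $\frac{2B}{\eta T}+\frac{L\eta G^2}{2}$ (the KL term vanishes exactly because $\alpha^0=\bar\alpha$) is slightly tighter than the paper's $\frac{2B}{\eta T}+\eta_\alpha G^4+\frac{\eta LG^2}{2}\le 3\sqrt{BLG^2/T}$, and your argument reveals that the prescribed $\eta_\alpha$ is not actually load-bearing — any $\eta_\alpha>0$ yields the rate — whereas in the paper's route $\eta_\alpha$ must be small enough for the Taylor bound and balanced so that $\eta_\alpha G^4 = \order{\sqrt{BLG^2/T}}$. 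What the paper's route buys in exchange is self-containedness: everything follows from two elementary scalar inequalities rather than quoting the Bregman three-point property (which does apply here, as you should note explicitly: the exponentiated update keeps $\alpha^t$ in the relative interior of $\Delta^{k-1}$ and the prox objective is linear in $\alpha$, so the step is an exact maximizer). One cosmetic slip: your parenthetical claim that the initial KL term is ``at most $\frac{\eta\log k}{k\eta_\alpha}$ in general'' is the bound for $D_{\mathrm{KL}}(u,\alpha^0)$ with uniform $\alpha^0$ and an \emph{arbitrary} comparator $u$; for your comparator $u=\bar\alpha$ it is exactly zero, so nothing downstream is affected.
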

In other words, the above result shows that Algorithm~\ref{alg:cg} finds an $\epsilon$-FOSP of the cumulative loss in $\order{\epsilon^{-4}}$ iterations. We present a high level outline of the proof here and present the complete proof of Theorem~\ref{thm:main} in Appendix~\ref{app:proof}.
\begin{proof}[Proof outline of Theorem~\ref{thm:main}]
Considering the $t^\textrm{th}$ iteration where the iterate is $\theta^t$ and mixing weights are $\alpha^t$.
Using the notation in Section~\ref{sec:method}, let us denote $\Rcal(\theta)= \frac{1}{k} \sum_i \ell_i(\theta)$, $g_i = \nabla \ell_i(\theta^t)$ and $g = \frac{1}{k} \sum_i g_i$. The update of our algorithm is given by:
\begin{align}
    \alpha_i^{t+1} &= \frac{\alpha_i^t \cdot \exp\left(\eta_\alpha \iprod{g_i}{g}\right)}{Z} \label{eqn:algo1-main} \\
    \theta^{t+1} &= \theta^t - \eta \sum_i \alpha_i^{t+1} g_i, \label{eqn:algo2-main}
\end{align}
where $Z = \sum_j \alpha_j^t \cdot \exp\left(\eta_\alpha \iprod{g_j}{g}\right)$.
Let us fix $\alpha^* = \left(1/k,\cdots,1/k\right) \in \R^k$ and use $KL(p,q) = \sum_i p_i \log \frac{p_i}{q_i}$ to denote the KL-divergence between $p$ and $q$. Noting that the update~\eqref{eqn:algo1-main} on $\alpha$ corresponds to mirror descent steps on the function $\iprod{g}{\sum_i \alpha_i g_i}$ and using mirror descent analysis, we obtain:
\begin{align}
    KL(\alpha^*, \alpha^{t+1}) &\leq KL(\alpha^*, \alpha^{t}) + \left(\sum_i \alpha_i^{t} \eta_\alpha \iprod{g_i}{g}\right) + \left(\eta_\alpha G^2\right)^2 - \eta_\alpha \norm{g}^2 \nonumber \\
    \Rightarrow - \sum_i \alpha_i^{t} \iprod{g_i}{g} &\leq - \norm{g}^2 + \frac{KL(\alpha^*, \alpha^{t}) - KL(\alpha^*, \alpha^{t+1})}{\eta_\alpha} + \eta_\alpha G^4.\nonumber
\end{align}
Using monotonicity of the $\exp$ function, we further show that $\sum_i \alpha_i^{t} \iprod{g_i}{g} \leq \sum_i \alpha_i^{t+1} \iprod{g_i}{g}$.

Using smoothness of $\Rcal$ and update~\eqref{eqn:algo2-main}, we then show:
\begin{align*}
    \Rcal(\theta^{t+1}) &\leq \Rcal(\theta^t) + \eta \left( - \norm{g}^2 + \frac{KL(\alpha^*, \alpha^{t}) - KL(\alpha^*, \alpha^{t+1})}{\eta_\alpha} + \eta_\alpha G^4 \right)+ \frac{\eta^2 L G^2}{2} \\
    \Rightarrow \norm{g}^2 &\leq \frac{\Rcal(\theta^t) - \Rcal(\theta^{t+1})}{\eta} + \frac{KL(\alpha^*, \alpha^{t}) - KL(\alpha^*, \alpha^{t+1})}{ \eta_\alpha} + {\eta_\alpha G^4}{} + \frac{\eta LG^2}{2}.
\end{align*}
Summing the above inequality over timesteps $t=1,\cdots,T$, and using the parameter choices for $\eta$ and $\eta_\alpha$ proves the theorem.
\end{proof}

\section{Qualitative Analysis}
\label{sec:qua}
In this section, we look at simple multi-group training scenarios to derive insights on the difference between {\dro} and {\cg}.  We start with a simple setup where one of the groups is noisy in Section~\ref{sec:qua:noise}.  In Section~\ref{sec:qua:rot}, we look at a setting with uneven inter-group transfer.
In Section~\ref{sec:qua:neg}, we study a spurious correlation setup.  
For all the settings, we train a linear binary classifier model for 400 epochs with batch SGD optimizer and learning rate 0.1. 
Unless otherwise stated, the number of input dimensions is two and the two features $x_1,x_2$ are sampled from a standard normal, with the binary label  $y=\mathbb{I}[x_1+x_2>0]$, and number of groups is three.
For each setting we will inspect the training weight ($\alpha_i^t$) assigned by each algorithm per group $i$ at epoch $t$ (Figure~\ref{fig:noisesimple},~\ref{fig:rotsimple},~\ref{fig:spusimple}). The plots are then used to draw qualitative judgements.
See Appendix~\ref{appendix:synth} for descriptive plots.

We also present parallel settings with MNIST images and deep models (ResNet-18) to stress that our observations are more broadly applicable. Appendix~\ref{appendix:mnist_qua} presents the details of this setup. 

\subsection{Label noise setup:  {\it {\dro} focuses on noisy groups.}}
\label{sec:qua:noise}

\begin{wrapfigure}{r}{0.5\textwidth}
  \centering
  \begin{minipage}{0.47\linewidth}
    \includegraphics[width=\linewidth]{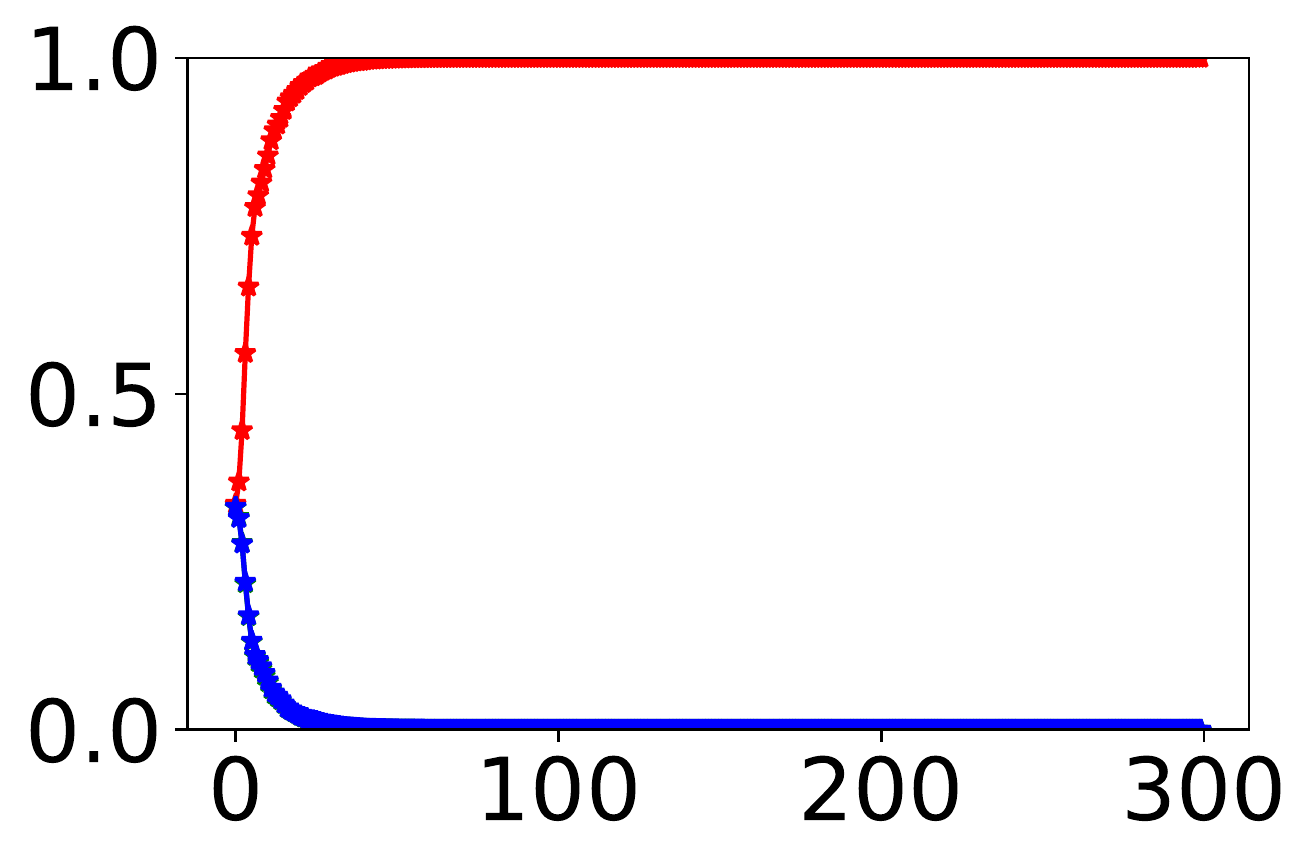}
    \subcaption{{\dro}}
    \label{fig:noisesimple:1}
  \end{minipage}\hfill
  \begin{minipage}{0.47\linewidth}
    \centering
    \includegraphics[width=\linewidth]{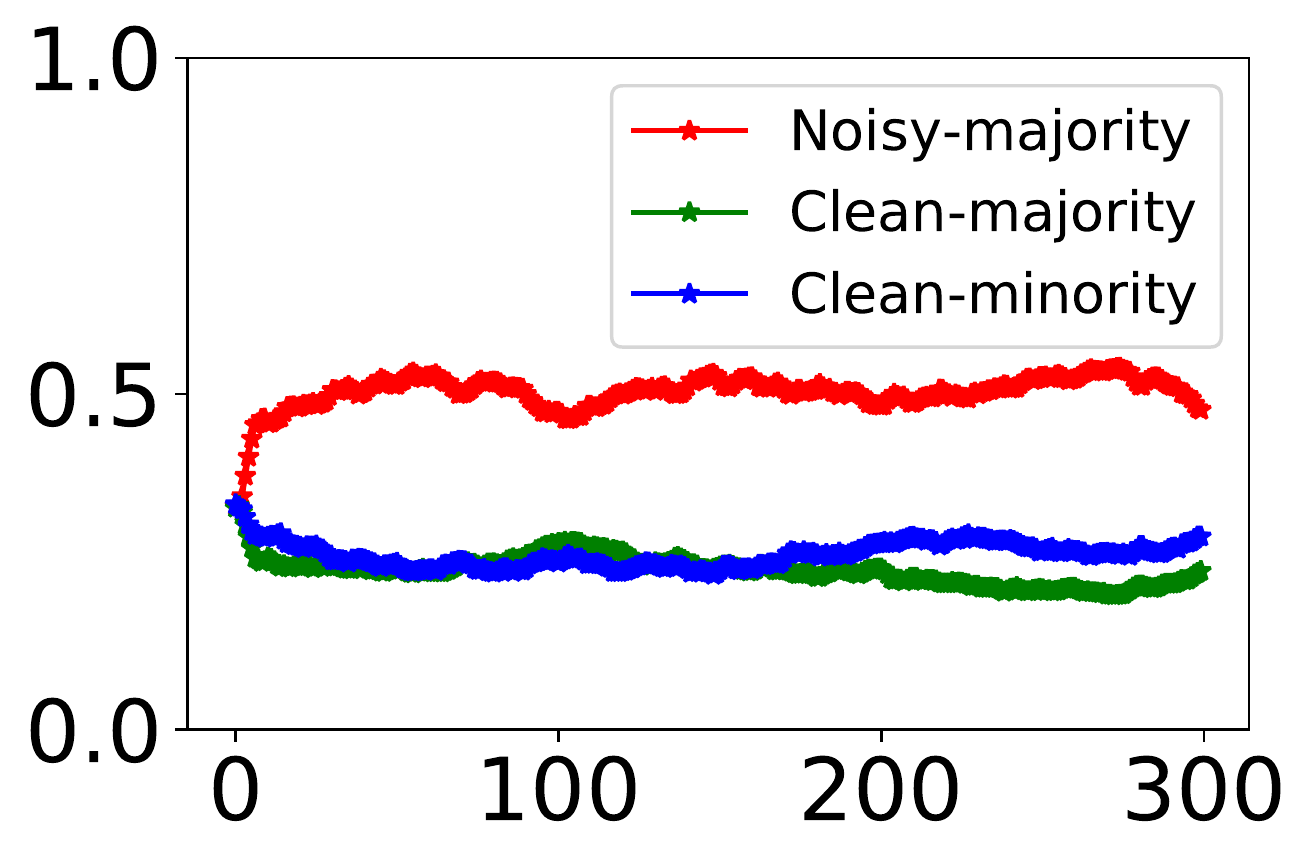}
    \subcaption{\cg}
    \label{fig:noisesimple:2}
  \end{minipage}
  \caption{Noise-Simple. Training weight ($\alpha$) of groups vs number of epochs.}
  \label{fig:noisesimple}
\end{wrapfigure}
We induced label noise in the first group, only during training, by flipping labels for randomly picked 20\% examples. The first and second group formed the majority with a population of 450 examples each and the minority last group had 100 examples. The noisy first group and the two subsequent groups are referred to as Noisy-Majority, Clean-Majority, and Clean-Minority. 
Due to noise in the first group, and since we cannot overfit in this setup, the loss on the first group was consistently higher than the other two. As a result, {\dro} trained only on the first group (Fig~\ref{fig:noisesimple:1}), while {\cg} avoided overly training on the noisy first group (Fig~\ref{fig:noisesimple:2}). This results in not only lower worst-case error but more robust training (Noise-Simple column, Table~\ref{tab:simple}). The variance of the learned parameters across six runs drops from 1.88 to 0.32 using {\cg}.

Table~\ref{tab:mnist} (Noise-MNIST column) shows the same trend under similar setup on the MNIST dataset with the ResNet-18 model.
These experiments demonstrate that \dro's method of focusing on the group with the worst training loss is sub-optimal  when groups have heterogeneous label noise or training difficulty. \cg's weights that are informed by inter-group interaction provide greater in-built robustness to group-specific noise.

\subsection{Uneven inter-group similarity: {\it {\cg} focuses on central groups}}
\label{sec:qua:rot}
\begin{wrapfigure}{r}{0.5\textwidth}
    \begin{subfigure}[b]{0.47\linewidth}
       \centering
       \includegraphics[width=\linewidth]{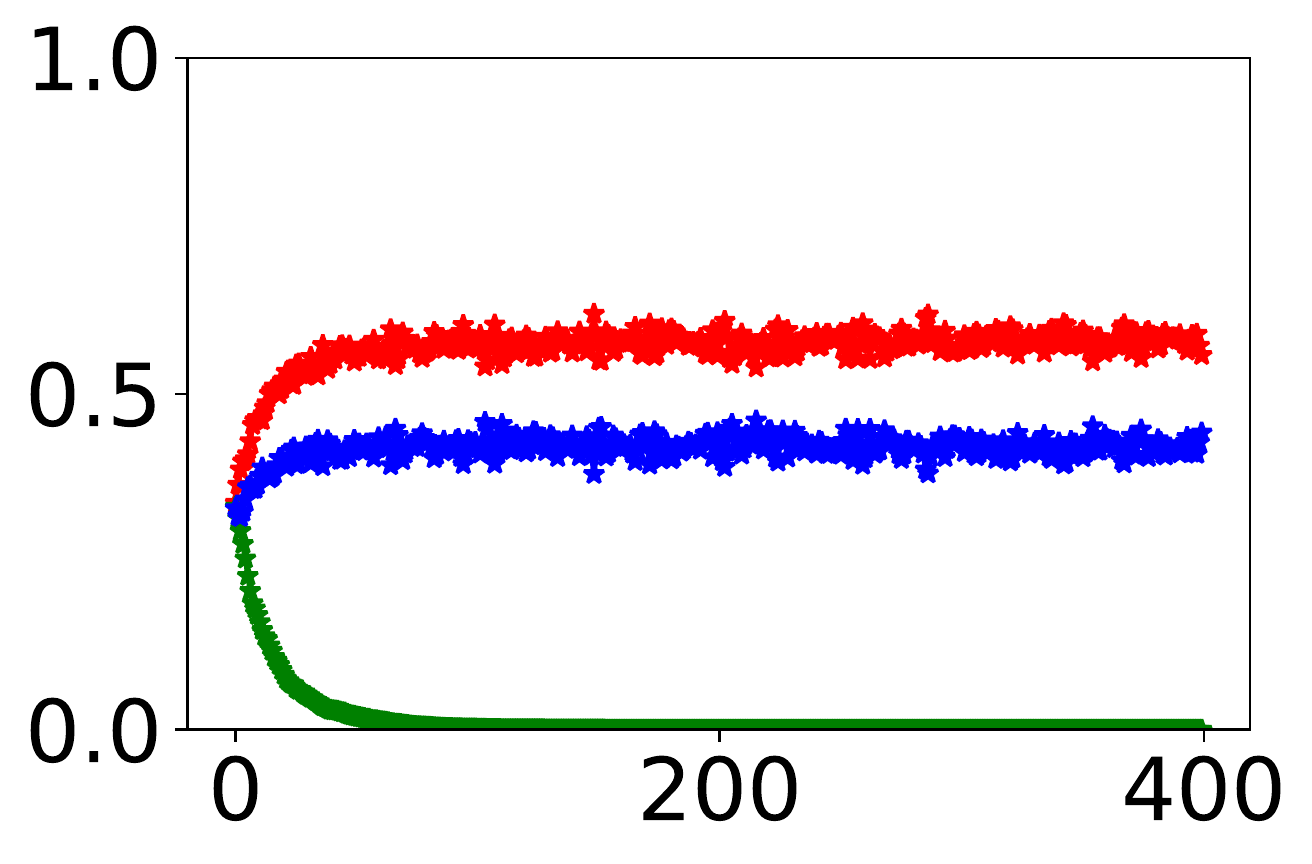}
       \caption{\dro}
       \label{fig:rotsimple:1}
     \end{subfigure}
     \begin{subfigure}[b]{0.47\linewidth}
       \centering
       \includegraphics[width=\linewidth]{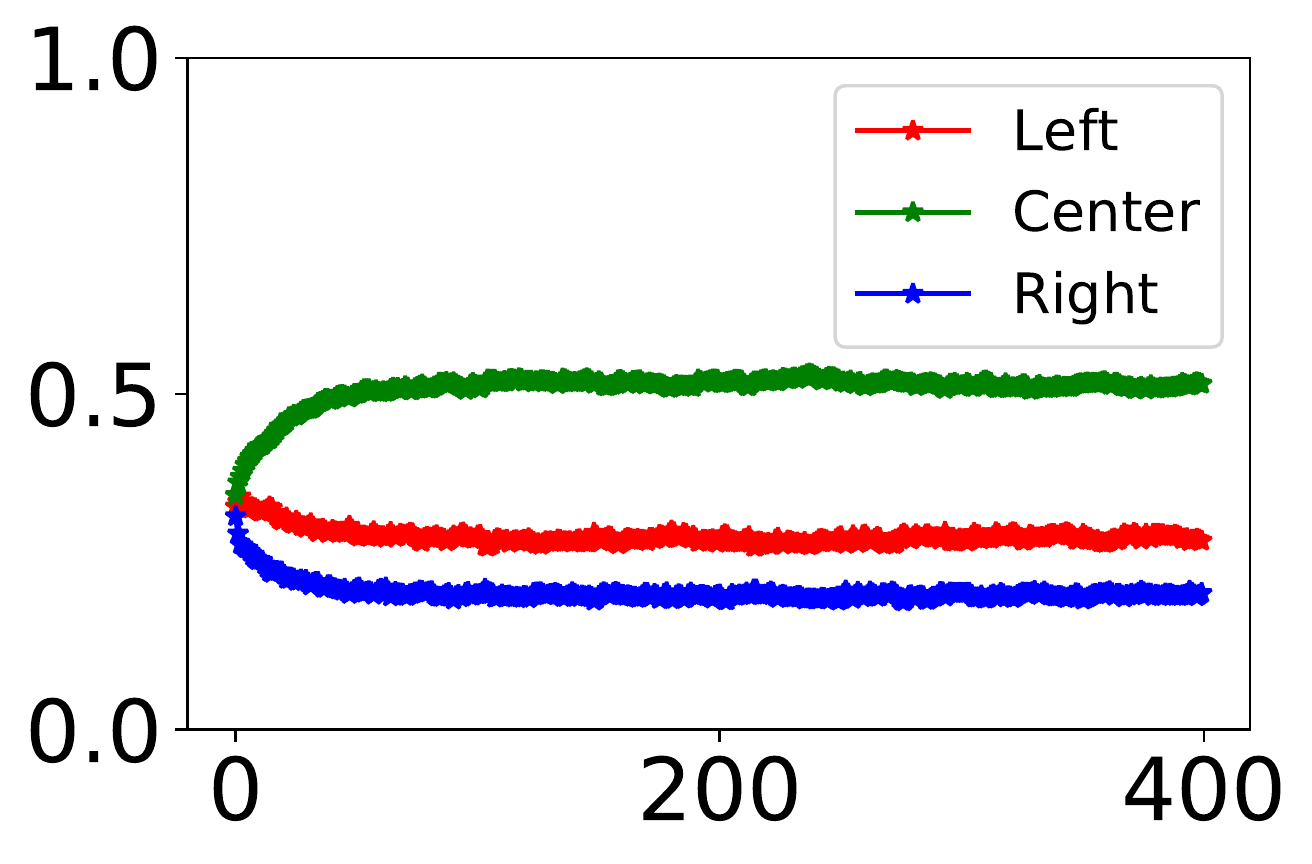}
       \subcaption{\cg}
       \label{fig:rotsimple:2}
     \end{subfigure}
     \captionof{figure}{Rot-Simple. Training weight ($\alpha$) of groups vs number of epochs.}
     \label{fig:rotsimple}
\end{wrapfigure}
Here we simulated a setting such that the optimal classifier for the first, third group is closest to the second. 
The label (y) was set to a group-specific deterministic function of the input ($x_1, x_2$); for the first group the mapping function was $y=\mathbb{I}[x_1>0]$ (which was rotated by 30 degrees for the two subsequent groups), for the second group it was $y=\mathbb{I}[0.87x_1 + 0.5x_2>0]$, and for the third group it was $y=\mathbb{I}[0.5x_1+0.87x_2>0]$. The angle between the classifiers is a proxy for the distance between the groups. We refer to the groups in the order as {\it Left, Center, Right}, and their respective training population was: 499, 499, 2.

The optimal classifier that generalizes equally well to all the groups is the center classifier. 
{\dro}, {\cg}, differed in how they arrive at the center classifier: {\dro} assigned all the training mass equally to the Left and Right groups (Figure~\ref{fig:rotsimple:1}), while {\cg} trained, unsurprisingly, mostly on the Center group that has the highest transfer (Figure~\ref{fig:rotsimple:2}). {\dro} up-weighted the noisy minority group (2 examples) and is inferior when compared with {\cg}'s strategy, which is reflected by the superior performance on the test set shown in Rotation-Simple column of Table~\ref{tab:simple}.  
On the MNIST dataset we repeated a similar experiment by rotating digits by 0, 30, and 60 degrees, and found \cg\ to provide significant gains over \dro\ (Rotation-MNIST column of Table~\ref{tab:mnist}). 
These experiments demonstrate the benefit of focusing on the central group even for maximizing worst case accuracy.

\subsection{Spurious correlations setup: {\it {\cg} focuses on groups without spurious correlations}} 
\label{sec:qua:neg}
\begin{wrapfigure}{r}{0.5\textwidth}
  \begin{minipage}{0.47\textwidth}
     \centering
     \hspace{5pt}
     \begin{subfigure}[b]{0.47\linewidth}
       \centering
       \includegraphics[width=\linewidth]{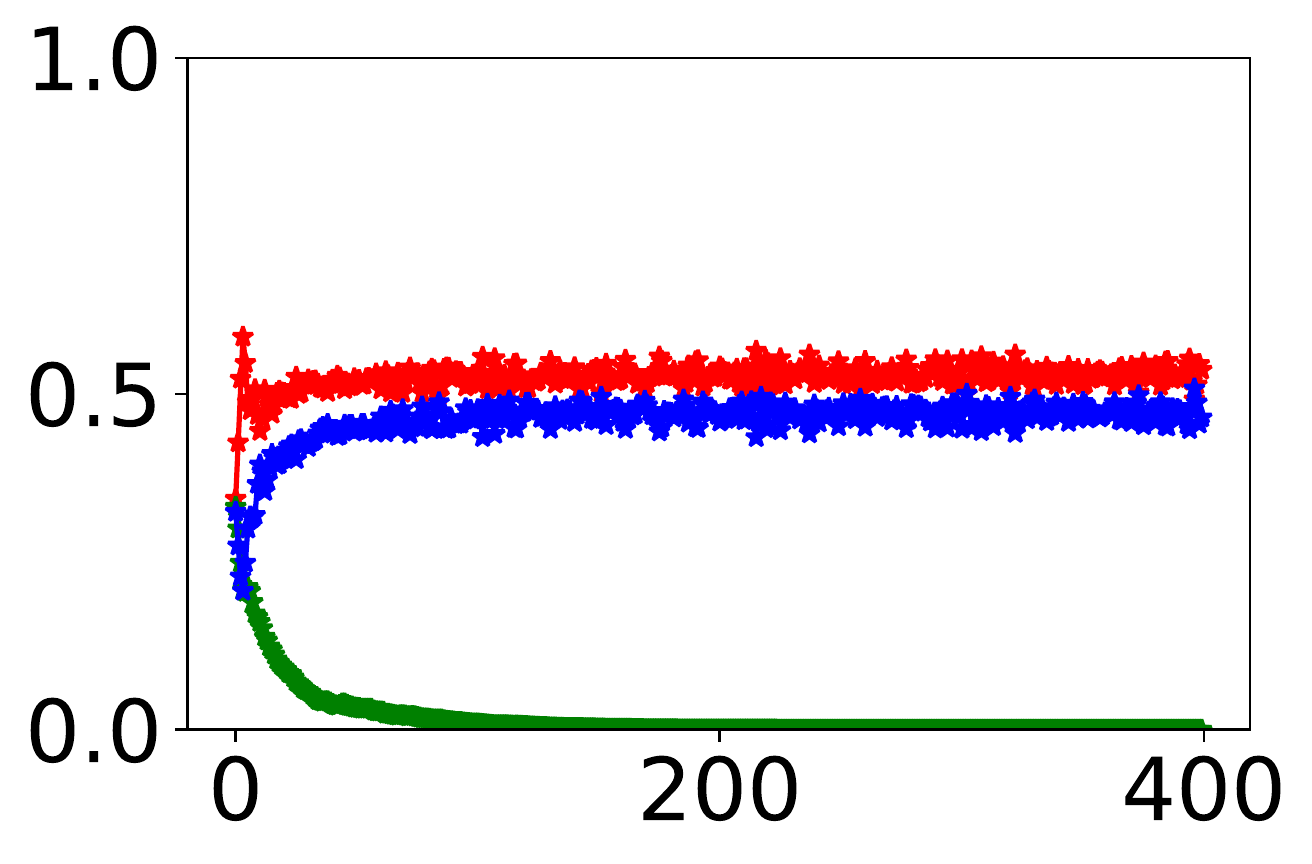}
       \subcaption{\dro}
       \label{fig:spusimple:1}
     \end{subfigure}
     \begin{subfigure}[b]{0.47\linewidth}
       \centering
       \includegraphics[width=\linewidth]{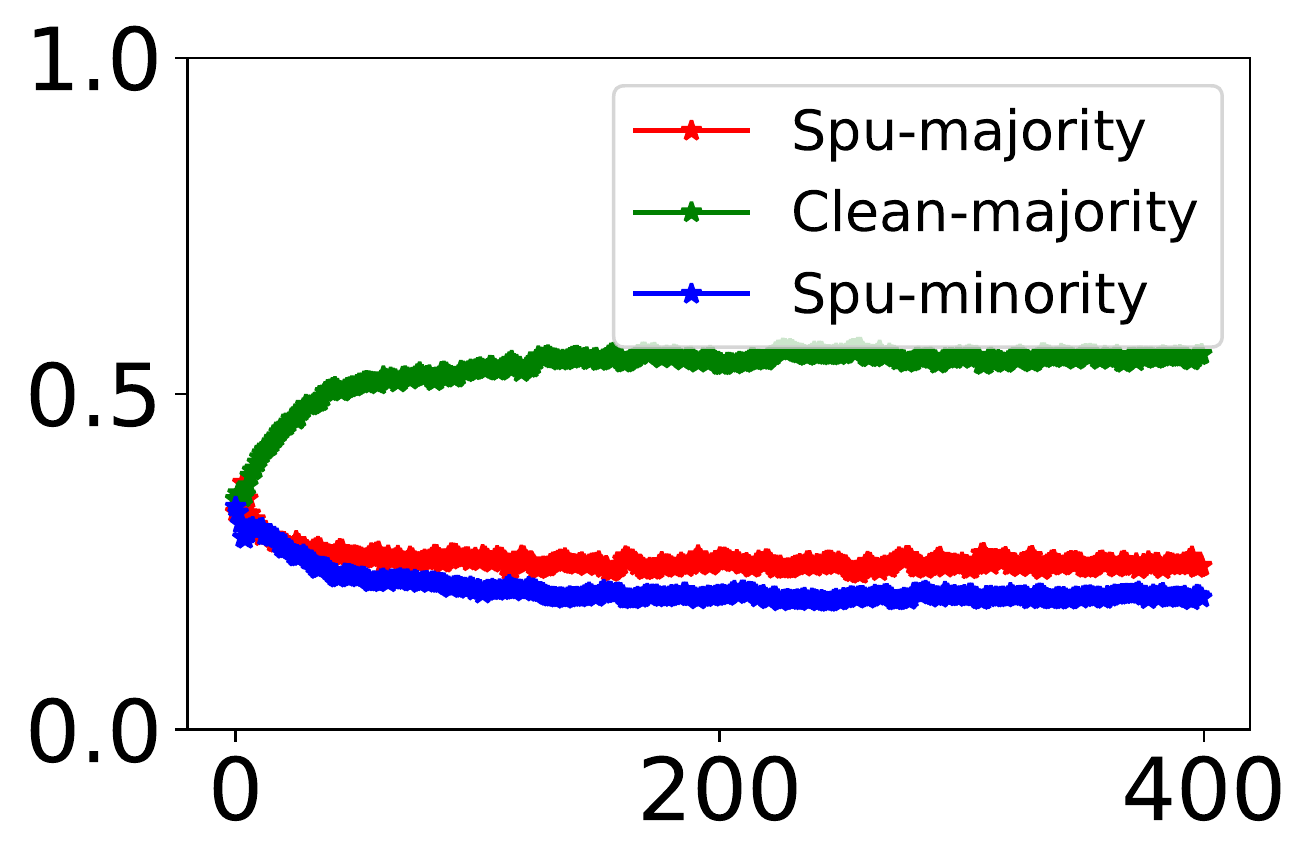}
       \subcaption{\cg}
       \label{fig:spusimple:2}
     \end{subfigure}
     \captionof{figure}{Spurious-Simple. Training weight ($\alpha$) of groups vs number of epochs.}
     \label{fig:spusimple}
  \end{minipage}
\end{wrapfigure}

Here we create spurious correlations by adding a third feature that takes different values across the three different groups whose sizes are  490, 490, and 20 respectively.
The third feature was set to the value of the label y on the first group, 1-y on the third group, and set to y or 1-y w.p. 0.6 on the second group. We perturbed the first two features of the first group examples such that they predict label correctly only for 60\% of the examples. Therefore, the net label correlation of the first two and the third feature is about the same: 0.8. We refer to the first and last groups as Spurious-majority, Spurious-minority--since they both contain an active spurious third feature--and the second as the clean majority. 


{\dro} balances training between the two spurious groups (Figure~\ref{fig:spusimple:1}), while {\cg} focuses mostly on the clean group (Figure~\ref{fig:spusimple:2}). Both the training strategies could avoid the spurious third feature, however, the training strategy of {\dro} is sub-optimal since we up-weight the small, hence noisy, spurious minority group. The better test loss of {\cg} in Table~\ref{tab:simple} (Spurious-Simple column) further verifies the drawback of {\dro}'s training strategy.  On the MNIST dataset too we observed similar trends by creating spurious correlation using colors of the digit (Spurious-MNIST column of Table~\ref{tab:mnist}).

More generally, whenever a relatively clean group with low spurious correlation strength is present, and when the train groups represent both positive and negative spurious correlations, we expect {\cg} to focus on the group with the least spurious correlation since it is likely to have greater inter-group similarity. 

\begin{table}[htb]
    \centering
    \begin{tabular}{|c|c|c|c|c|c|c|}
    \hline
     {\bf Task} $\rightarrow$ &
     \multicolumn{2}{|c|}{\bf Noise-Simple} & \multicolumn{2}{|c|}{\bf Rotation-Simple} &
     \multicolumn{2}{|c|}{\bf Spurious-Simple}\\
     Alg$\downarrow$ & Variance & Worst Loss & Variance & Worst Loss & Variance & Worst Loss\\\hline
    {\dro} & 1.88 & 0.35 (0.03) & 0.41 & 0.77 (0.14) & 0.17 & 0.70 (0.16)\\
    \rowcolor{LightGreen} {\cg} & 0.32 & 0.25 (0.02) & 0.08 & 0.59 (0.05) & 0.04 & 0.43 (0.06) \\\hline
    \end{tabular}
    \caption{Comparison of {\dro}, {\cg} across the three simple settings. Worst loss is the worst group binary cross entropy loss on the test set, and averaged over six seeds, shown in parenthesis is the standard deviation. Variance column shows the variance of $L_\infty$ normalized solution across the six runs. {\cg} has lower variance and test loss when compared with {\dro}.}
    \label{tab:simple}
\end{table}
\begin{table}[tbh]
    \centering
    \begin{tabular}{|c|r|r|r|}
    \hline
         \thead{Alg. $\downarrow$} & \thead{Noise-MNIST} & \thead{Rotation-MNIST} & \thead{Spurious-MNIST} \\\hline
         {\dro} & 86.0 (1.0), 85.9 (1.0) &  90.5 (0.2), 80.6 (1.1) & 95.4 (0.2), 95.0 (0.4) \\
         \rowcolor{LightGreen} {\cg} & 88.9 (1.0), 88.8 (1.0) & 92.1 (0.6), 85.0 (0.7) & 96.7 (0.1), 96.4 (0.3) \\\hline
    \end{tabular}
    \captionof{table}{Average, worst group accuracy on the test split. Shown in parenthesis is the standard deviation. All numbers are aggregated over three runs. More details of the setup in Appendix~\ref{appendix:mnist_qua}}
    \label{tab:mnist}
\end{table}

\section{Experiments}
\label{sec:expt}
We compare \cg\ with  state of the art methods from group-distribution shift and domain generalization literature. We enumerate datasets in Section~\ref{sec:datasets}, discuss the baselines and present implementation details in Section~\ref{sec:details}, and finally present our results in Section~\ref{sec:results}.

\subsection{Datasets}
\label{sec:datasets}
We evaluated on eight datasets, which include two synthetic datasets with induced spurious correlations: ColoredMNIST, WaterBirds; two real-world datasets with known spurious correlations: CelebA, MultiNLI; four WILDS~\citep{KohWilds20} datasets with a mix of sub-population and domain shift. Table~\ref{tab:dataset:all} summarises all the eight datasets.  We describe all the datasets in Appendix~\ref{appendix:datasets}. For more details on WaterBirds, CelebA, MultiNLI, we point the reader to the original paper~\citep{Sagawa19}, and~\citet{KohWilds20} for details on the WILDS datasets. We discuss how some of the real-world datasets relate to our synthetic setting in Appendix~\ref{appendix:synth_to_real}.

\begin{table}[htb]
    \centering
    \setlength{\tabcolsep}{3pt}
    \begin{tabular}{|l|c|c|c|c|c|c|}
    \hline
         \thead{Dataset} & \thead{\# labels} & \thead{\# groups} & \thead{Population type} & \thead{Type} & \thead{Size} & \thead{Worst ratio} \\\hline
         Colored MNIST &  2 & 2 & Label$\times$Group & Image & 50K & 1000 \\
         WaterBirds & 2 & 2 & Label$\times$Group & Image & 4.8K & 62.5\\
         CelebA & 2 & 2 & Label$\times$Group & Image & 162K & 51.6\\
         MultiNLI & 3 & 2 & Label$\times$Group & Text & 200K & 44.3 \\
         CivilComments-WILDS & 2 & 2 & Label$\times$Group & Text & 270K & 74.5 \\
         PovertyMap-WILDS & real & 13 & Group & Image & 10K & 5.9 \\
         FMoW-WILDS & 62 & 11 & Group & Image & 80K & 16.0 \\
         Camelyon17-WILDS & 2 & 3 & Group & Image & 3K & 2.5 \\\hline
    \end{tabular}
    \caption{Summary of the datasets we used for evaluation. The Size column shows the number of training instances. The Worst ratio column shows the ratio of the size of the largest train group to the smallest; worst ratio reflects loosely the strength of the spurious correlations.}
    \label{tab:dataset:all}
\end{table}


\subsection{Experiment Details}
\label{sec:details}
{\bf Baselines}\\
{\it \erm}: Simple descent on the joint risk.\\
{\it \ermuw}, {\it \dro}: risk reshaping baselines described in Section~\ref{sec:method}.
With {\ermuw}, instead of simply up-weighting all the groups equally, we set the group weighting parameter $\alpha_i$ of a group $i$ as a function of its size. For some non-negative constant C, $\alpha_i\propto\exp(C/\sqrt{n_i})$. \\
{\it \pgi}~\citep{Ahmed21}, predictive group invariance, penalizes the divergence between predictive probability distributions among the training groups, and was shown to be effective for in-domain as well as out-domain generalization. We provide more details about the algorithm in Appendix~\ref{appendix:technical}.

On the WILDS dataset, we also compare with two algorithms from Domain Generalization literature: {\it CORAL}~\citep{CORAL}, {\it IRM}~\citep{ArjovskyIRM19}. Domain Generalization methods are relevant since they could generalize to any domain including the seen train domains.


{\bf Implementation Details and Evaluation Metric: }
We use the codestack\footnote{\url{https://github.com/p-lambda/wilds}} released with the WILDS~\citep{KohWilds20} dataset as our base implementation. Our code can be found at this \href{https://github.com/vps-anonconfs/cg-iclr22}{URL}. We report the average and standard deviation of the evaluation metric from at least three runs. If not otherwise stated explicitly, we report worst accuracy over the groups evaluated on the test split. 

{\bf Hyperparameters: }
We search $C$: the group adjustment parameter for {\dro}, {\cg}, and group weighting parameter for {\ermuw}, over the range $[0, 20]$. We tune $C$ only for WaterBirds, CelebA and MultiNLI datasets for a fair comparison with {\dro}. For other dataset, we set C to 0 because group adjustment tuning did not yield large improvements on many datasets. 
The step size parameter of {\dro}, {\cg}, and $\lambda$ penalty control parameter of {\pgi}, is picked from \{1, 0.1, 1e-2, 1e-3\}. We follow the same learning procedure of \citet{Sagawa19} on Colored-MNIST, WaterBirds, CelebA, MultiNLI, datasets, we pick the best learning rate parameter from \{1e-3, 1e-5\}, weight decay from \{1e-4, .1, 1\}, use SGD optimizer, and set the batch size to 1024, 128, 64, 32 respectively.
On WILDS datasets, we adopt the default optimization parameters\footnote{\href{https://github.com/p-lambda/wilds/blob/747b774a8d7a89ae3bde3bc09f3998807dfbfea5/examples/configs/datasets.py}{WILDS dataset parameter configuration Github URL.}}, and only tune the step-size parameter.\\
We report the test set's performance corresponding to the best hyperparameter(s) and epoch on the validation split. We use standard train-validation-test splits for all the datasets when available.

{\bf Backbone Model}\\
We followed closely the training setup of~\citet{Sagawa19,KohWilds20}. 
On WaterBirds, CelebA dataset, we used ResNet50; on Colored-MNIST, PovertyMap, we used ResNet18; on Cemlyon17, FMoW, we used DenseNet121. All the image models except Colored-MNIST (which trains on $28\times 28$ images), PovertyMap (which deals with multi-spectral version of ResNet18), are pretrained on ImageNet. MultiNLI, CivilComments, use a pretrained uncased DistilBERT-base model. 

\subsection{Results}
\label{sec:results}

Table~\ref{tab:results} shows the worst group test split accuracy for the four standard (non-WILDS) sub-population shift datasets. For all the tasks shown, {\erm} performs worse than a random baseline on the worst group, although the average accuracy is high. {\ermuw} is a strong baseline, and improves the worst group accuracy of {\erm} on three of the four tasks, without hurting much the average accuracy. {\pgi} is no better than {\erm} or {\ermuw} on all the tasks except on Colored-MNIST. {\dro} improves worst accuracy on most datasets, however, {\cg} fares even better. {\cg} improves worst-group accuracy on all the datasets over {\erm}. Except on MultiNLI text task, the gains of {\cg} are significant over other methods, and on MultiNLI task the worst group accuracy of {\cg} is at least as good as {\dro}. On Colored-MNIST, which has the highest ratio of majority to minority group size, the gains of {\cg} are particularly large. 

We report comparisons using the four WILDS datasets in Table~\ref{tab:results:wilds}. We show both in-domain (ID) and out-domain (OOD) generalization performance when appropriate, they are marked in the third row. All the results are averaged over multiple runs; FMoW numbers are averaged over three seeds, CivilComments over five seeds, Camelyon17 over ten seeds, and PovertyMap over the five data folds. Confirming with the WILDS standard, we report worst group accuracy for FMoW, CivilComments, worst region Pearson correlation for PovertyMap, average out-of-domain accuracy for Camelyon17. We make the following observations from the results.
{\erm} is surprisingly strong on all the tasks except CivilComments. Strikingly, {\dro} is worse than {\erm} on four of the five tasks shown in the table, including the in-domain (sub-population shift) evaluation on PovertyMap task. {\cg} is the only algorithm that performs consistently well across all the tasks.
The results suggest {\cg} is significantly robust to sub-population shift, and performs no worse than {\erm} on domain shifts. Further, we study Colored-MNIST dataset under varying ratio of majority to minority group sizes in Appendix~\ref{appendix:hetero} and demonstrate that {\cg} is robust to sub-population shifts even under extreme training population disparity. 

\begin{table}[tbh]
\centering
\setlength{\tabcolsep}{2pt}
\begin{tabular}{|l|l|c|c|c|}
\hline
\thead{Alg. $\downarrow$} & \thead{CMNIST} & \thead{WaterBirds} & \thead{CelebA} & \thead{MultiNLI}\\
\hline
{\erm} & 50 (0.0), 0 (0.0) & 85.2 (0.8), 61.2 (1.4)  & {\bf 95.2 (0.2)}, 45.2 (0.9) & {\bf 81.8 (0.4)}, 69.0 (1.2)\\
{\ermuw} & 53.6 (0.1), 7.2 (2.1) & {\bf 91.8 (0.2)}, 86.9 (0.5) & 92.8 (0.1), 84.3 (0.7) & 81.2 (0.1), 64.8 (1.6) \\
{\pgi} & 52.5 (1.7), 44.4 (1.5) & 89.0 (1.8), 85.8 (1.8) & 92.9 (0.2), 83.0 (1.3) & 80.8 (0.8), 69.0 (3.3) \\
G-DRO & 73.5 (5.3), 48.5 (11.5) & 90.1 (0.5), 85.0 (1.5) & 92.9 (0.3), 86.1 (0.9) & 81.5 (0.1), {\bf 76.6 (0.5)} \\
{\cg} & {78.6 (0.5), \bf 65.6 (5.9)} & 91.3 (0.6), {\bf 88.9 (0.8)} & 92.5 (0.2), {\bf 90.0 (0.8)} & 81.3 (0.2), {\bf 76.1 (1.5)} \\\hline
\end{tabular}
\caption{Average and worst-group accuracy in that order. All numbers are averaged over 3 seeds and standard deviation is shown in parenthesis. G-DRO is abbreviation for {\dro}.}
\label{tab:results}
\end{table}

\begin{table}[htb]
    \centering
    \setlength{\tabcolsep}{5pt}
    \begin{tabular}{|l|c|c|c|c|c|}
    \hline
    \thead{Algorithm$\downarrow$} & \thead{Camelyon17} & \multicolumn{2}{c}{\bf PovertyMap} & \thead{FMoW} & \thead{CivilComments} \\
    Metric $\rightarrow$ & Avg. Acc. & \multicolumn{2}{c|}{Worst Pearson r} & Worst-region Acc & Worst-Group Acc. \\
    Eval. type $\rightarrow$ & OOD & ID & OOD & OOD & ID \\
    \hline
         CORAL & 59.5 (7.7) & {\bf 0.59 (0.03)} & {\bf 0.44 (0.06)} & 31.7 (1.2) & 65.6 (1.3) \\
         IRM & 64.2 (8.1) & 0.57 (0.08) & 0.43 (0.07) & 30.0 (1.4) & 66.3 (2.1) \\
         ERM & {\bf 70.3 (6.4)} & 0.57 (0.07) & {\bf 0.45 (0.06)} & {\bf 32.3 (1.2)} & 56.0 (3.6) \\
         \dro & 68.4 (7.3) & 0.54 (0.11) & 0.39 (0.06) & 30.8 (0.8) & {\bf 70.0 (2.0)} \\
         \cg  & {\bf 69.4 (7.8)} & {\bf 0.58 (0.05)} & 0.43 (0.03) & {\bf 32.0 (2.2)} & {\bf 69.1 (1.9)}\\ \hline
    \end{tabular}
    \caption{Evaluation on WILDS datasets: All numbers averaged over multiple runs, standard deviation is shown in parenthesis. Second row shows the evaluation metric, and the third shows the evaluation type: in-domain (ID) or out-of-domain (OOD). Two highest absolute performance numbers are marked in bold in each column.}
    \label{tab:results:wilds}
\end{table}

\section{Related Work}
Distributionally robust optimization (DRO) methods~\citet{duchi18} seek to provide uniform performance across all examples, through focus on high loss groups. As a result, they are not robust to the presence of outliers~\citep{Hashimoto18,Hu18,DORO21}. \citet{DORO21} handles the outlier problem of DRO by isolating examples with high train loss.

\cite{Sagawa19} extend DRO to the case where training data comprises of a set of groups like in our setting.
\citet{Dagaev21,Liu21,Creager21,Ahmed21} extend the sub-population shift problem to the case when the group annotations are unknown. They proceed by first inferring the latent groups of examples that negatively interfere in learning followed by robust optimization with the identified groups. In the same vein,~\citet{Zhou21,Bao21}  build on {\dro} for the case when the supervised group annotations cannot recover the ideal distribution with no spurious features in the family of training distributions they represent. \citet{Zhou21} maintains per-group and per-example learning weights, in order to handle noisy group annotations. \citet{Bao21} uses environment specific classifiers to identify example groupings with varying spurious feature correlation, followed by {\dro}'s worst-case risk minimization. 

\citet{Goel20} augment the minority group with generated examples. However, generating representative examples may not be easy for all tasks. 
In contrast, \citet{SagawaExacerbate20} propose to sub-sample the majority group to match the minority group. In the general case, with more than two groups and when group skew is large, such sub-sampling could lead to severe data loss and poor performance for the majority group. \citet{MenonOverparam21} partially get around this limitation by pre-training on the majority group first.   \citet{Ahmed21} considered as a baseline a modified version of {\dro} such that all the label classes have equal training representation.  However, these methods have only been applied on a restricted set of image datasets with two groups, and do not consistently outperform {\dro}. In contrast, we show consistent gains over DRO and ERM up weighing, and our experiments are over eight datasets spanning image and text modalities.

\todo{Negative transfer part of RW is commented for page-limit, figure out later}

{\bf Domain Generalization} (DG)  algorithms~\citep{CORAL,PiratlaCG,ArjovskyIRM19,Ahmed21} train such that they generalize to all domains including the seen domains that is of interest in the sub-population shift problem. However, due to the nature of DG benchmarks, the DG algorithms are not evaluated on cases when the domain sizes are heavily disproportionate such as in the case of sub-population shift benchmarks (Table~\ref{tab:dataset:all}). We compare with two popular DG methods: CORAL~\citet{CORAL} and IRM~\cite{ArjovskyIRM19} on the WILDS benchmark, and obtain significantly better results than both.

\section{Conclusion}

Motivated by the need for modeling inter-group interactions for minority group generalization, we present a simple, new algorithm {\cg} for training with group annotations. 
We demonstrated the qualitative as well as empirical effectiveness of {\cg} over existing and relevant baselines through extensive evaluation using simple and real-world datasets. We also prove that {\cg} converges to FOSP of ERM objective even though it is not performing gradient descent on ERM.\\
\textbf{Limitations and Future work:} 
\cg\ critically depends on group annotated training data to remove spurious correlations or reduce worst case error on a sub-population.  This limits the broad applicability of the algorithm.  
As part of future work, we plan to extend \cg\ to work in settings with noisy or unavailable group annotations following recent work that extends {\dro} to  such settings. 
The merits of {\cg}, rooted in modeling inter-group interactions, could be more generally applied to example weighting through modeling of inter-example interactions, which could be of potential leverage in training robust models without requiring group annotations.


\section{Reproducibility Statement}
We release the anonymized implementation of our algorithms publicly at this \href{https://github.com/vps-anonconfs/cg-iclr22}{link}, with instructions on running the code and pointers to datasets. We describe all our datasets in Section~\ref{sec:expt} and Appendix~\ref{appendix:datasets}. In Section~\ref{sec:expt}, we also detail the hyperameter ranges, strategy for picking the best model. The detailed proof of our convergence theorem can be found in Appendix~\ref{app:proof}.

\section{Acknowledgements}
The first author is supported by Google PhD Fellowship. We acknowledge engaging and insightful discussions with Divyat Mahajan and Amit Sharma during the initial phases of the project. We gratefully acknowledge the Google's TPU research grant that accelerated our experiments.

\bibliography{main}
\bibliographystyle{iclr2022_conference}

\clearpage
\appendix
\appendix
{\centering \Large \bfseries Appendix
\par\bigskip}
\section{Scaling Rule For Gradients Using Local Quadratic Approximation}
\label{appendix:grad_approx}
The gradient norms can vary widely even for a given task depending on where we are in the loss landscape.
We will now illustrate this issue. Consider any group $i$. Since the training loss $\ell_i(\theta) \geq 0$ for all $\theta$, any $\hat{\theta}$ satisfying $\ell_i(\hat{\theta}) = 0$ is an approximate global minimizer. For any such global minimizer with $\ell_i(\hat{\theta}) = 0$ and $\norm{\nabla \ell_i(\hat{\theta})} = 0$, the local approximation of $\ell_i$ given by the second order Taylor expansion at $\hat{\theta}$ is:
\begin{align*}
{\ell_i}(\theta) \approx \frac{1}{2} \iprod{\theta - \hat{\theta}}{\nabla^2 \ell_i(\hat{\theta}) \left(\theta - \hat{\theta}\right)} \mbox{ and }
\nabla {\ell_i}(\theta) \approx {\nabla^2 \ell_i(\hat{\theta}) \left(\theta - \hat{\theta}\right)}.
\end{align*}

Denoting $\underline{\sigma}=\sigma_\textrm{min}(\nabla^2 \ell_i(\hat{\theta})) \geq 0$ and $\overline{\sigma}=\sigma_\textrm{max}(\nabla^2 \ell_i(\hat{\theta})) \geq 0$ to be the smallest and the largest eigenvalue of $\nabla^2\ell_i(\hat\theta)$, we have that $\|\nabla\ell_i(\theta)\|$ bounded between $\sqrt{\underline{\sigma}}\cdot \sqrt{\ell_i(\theta)}$ and $\sqrt{\overline{\sigma}}\cdot \sqrt{\ell_i(\theta)}$. Consequently, the gradient norm can vary widely between smallest and largest eigenvalues. Besides, the gradient scale also depends on the number of parameters, task, dataset and architecture. 

To summarize, we show that gradients are of the order of the loss, but can take a large spectrum of values, which makes the $\alpha$ updates unstable and $\eta_\alpha$ tuning difficult if used as is in~\eqref{eqn:alpha_update}. Since we are only interested in capturing if a group transfers positively or negatively, we retain the cosine-similarity of the gradient dot-product but control their scale through $\ell(\theta)^p$, for some $p>0$. That is, we set the gradient $\nabla\ell_i(\theta)$ to $\frac{\nabla\ell_i(\theta)}{\|\nabla \ell_i(\theta)\|}\ell_i(\theta)^p$ leading to the final $\alpha$ update shown in~\eqref{eqn:final_alpha_update}.

When we set $p$ to a very large value, the gradient inner products ${\ell_i}^p {\ell_j}^p \cos(\nabla \ell_i({\theta}),\nabla \ell_j({\theta}))$, and hence the $\alpha$ value are largely influenced by the loss values. On the other hand, when we set $p$ to 0, we could get stuck in picking low loss train groups repeatedly without significant parameter update and hence not converge. In practice, neither of the extremes are ideal. We tried $p=\{0.25, 0.5, 1, 2\}$ on our synthetic setup, WaterBirds, CelebA, and found $p=0.5$ to perform well empirically.

\section{Proof of Theorem~\ref{thm:main}}\label{app:proof}
\begin{proof}[Proof of Theorem~\ref{thm:main}]
Let us consider the $t^\textrm{th}$ iteration where the iterate is $\theta^t$ and mixing weights are $\alpha^t$.
Using the notation in Section~\ref{sec:method}, we denote $\Rcal(\theta)= \frac{1}{k} \sum_i \ell_i(\theta)$, $g_i = \nabla \ell_i(\theta^t)$ and $g = \frac{1}{k} \sum_i g_i$. The update of our algorithm is given by:
\begin{align}
    \alpha_i^{t+1} &= \frac{\alpha_i^t \cdot \exp\left(\eta_\alpha \iprod{g_i}{g}\right)}{Z} \label{eqn:algo1} \\
    \theta^{t+1} &= \theta^t - \eta \sum_i \alpha_i^{t+1} g_i, \label{eqn:algo2}
\end{align}
where $Z = \sum_j \alpha_j^t \cdot \exp\left(\eta_\alpha \iprod{g_j}{g}\right)$.
Let us fix $\alpha^* = \left(1/k,\cdots,1/k\right) \in \R^k$ and use $KL(p,q) = \sum_i p_i \log \frac{p_i}{q_i}$ to denote the KL-divergence between $p$ and $q$. We first note that the update~\eqref{eqn:algo1} on $\alpha$ corresponds to mirror descent steps on the function $\iprod{g}{\sum_i \alpha_i g_i}$ and obtain:
\begin{align}
    KL(\alpha^*, \alpha^{t+1}) &= \sum_i \alpha_i^{*} \log \frac{\alpha_i^{*}}{\alpha_i^{t+1}}
    = {\sum_i \alpha_i^{*} \log \frac{\alpha_i^{*} Z}{\alpha_i^t \cdot \exp\left(\eta_\alpha \iprod{g_i}{g}\right)}} \nonumber \\
    &= \sum_i \alpha_i^{*} \log \frac{\alpha_i^{*}}{\alpha_i^t} -  \eta_\alpha \alpha_i^* \iprod{g_i}{g} + \alpha_i^* \log Z \nonumber \\
    &= KL(\alpha^*, \alpha^t) - \eta_\alpha \norm{g}^2 + \log \left(\sum_i \alpha_i^{t} \exp\left(\eta_\alpha \iprod{g_i}{g}\right)\right). \label{eqn:KL}
\end{align}
Since each $\ell_i(\cdot)$ is $G$-Lipschitz, we note that $\norm{g_i}\leq G$ for every $i$. Consequently, $\norm{g} \leq G$ and $\abs{\iprod{g_i}{g}} \leq G^2$. Using the fact that $\exp(x) \leq 1+x+x^2$ for $\abs{x} < 1$, and since $\eta_\alpha \leq \frac{1}{G^2}$, we see that:
\begin{align*}
    \log\left(\sum_i \alpha_i^{t} \exp\left(\eta_\alpha \iprod{g_i}{g}\right)\right) &\leq \log\left(\sum_i \alpha_i^{t} \left(1+ \eta_\alpha \iprod{g_i}{g} + \left(\eta_\alpha G^2\right)^2\right)\right) \\
    &=  \log\left(1+ \left(\sum_i \alpha_i^{t} \eta_\alpha \iprod{g_i}{g}\right) + \left(\eta_\alpha G^2\right)^2\right) \\
    &\leq \left(\sum_i \alpha_i^{t} \eta_\alpha \iprod{g_i}{g}\right) + \left(\eta_\alpha G^2\right)^2,
\end{align*}
where we used $\log(1+x) \leq x$ in the last step. Plugging this back into~\eqref{eqn:KL}, we obtain:
\begin{align}
    KL(\alpha^*, \alpha^{t+1}) &\leq KL(\alpha^*, \alpha^{t}) + \left(\sum_i \alpha_i^{t} \eta_\alpha \iprod{g_i}{g}\right) + \left(\eta_\alpha G^2\right)^2 - \eta_\alpha \norm{g}^2 \nonumber \\
    \Rightarrow - \sum_i \alpha_i^{t} \iprod{g_i}{g} &\leq - \norm{g}^2 + \frac{KL(\alpha^*, \alpha^{t}) - KL(\alpha^*, \alpha^{t+1})}{\eta_\alpha} + \eta_\alpha G^4. \label{eqn:KL-2}
\end{align}
We will now argue that $\sum_i \alpha_i^{t} \iprod{g_i}{g} \leq \sum_i \alpha_i^{t+1} \iprod{g_i}{g}$. To show this, it suffices to show that:
\begin{align*}
    \sum_i \alpha_i^{t} \iprod{g_i}{g} &\leq \sum_i \alpha_i^{t+1} \iprod{g_i}{g} \\
    \Leftrightarrow \left(\sum_i \alpha_i^{t} \iprod{g_i}{g}\right) \left(\sum_i \alpha_i^{t} \exp\left(\eta_\alpha \iprod{g_i}{g}\right)\right) &\leq \sum_i \alpha_i^{t} \iprod{g_i}{g} \exp\left(\eta_\alpha \iprod{g_i}{g}\right)\\
    \Leftrightarrow \sum_i \alpha_i^{t} \left(\iprod{g_i}{g}-\ipbar\right) \left(\exp\left(\eta_\alpha \iprod{g_i}{g}\right)-\expbar\right) &\geq 0,
\end{align*}
where $\ipbar = \sum_i \alpha_i^{t} \iprod{g_i}{g}$ and $\expbar = \sum_i \alpha_i^{t} \iprod{g_i}{g} \exp\left(\eta_\alpha \iprod{g_i}{g}\right)$. The last inequality follows from the fact that $\exp(\cdot)$ is an increasing function and hence for any random variable $X$, covariance of $X$ and $\exp(X)$ is greater than or equal to zero.

Now coming back to the update~\eqref{eqn:algo2}, we can argue its descent property as follows:
\begin{align*}
    \Rcal(\theta^{t+1}) &\leq \Rcal(\theta^t) - \eta \iprod{g}{\sum_i \alpha_i^{t+1}g_{i}} + \frac{\eta^2 L}{2} \norm{\sum_i \alpha_i^{t+1}g_{i}}^2 \\
    &\leq \Rcal(\theta^t) + \eta \left( - \norm{g}^2 + \frac{KL(\alpha^*, \alpha^{t}) - KL(\alpha^*, \alpha^{t+1})}{\eta_\alpha} + \eta_\alpha G^4 \right)+ \frac{\eta^2 L}{2} \norm{\sum_i \alpha_i^{t+1}g_{i}}^2 \\
    &\leq \Rcal(\theta^t) + \eta \left( - \norm{g}^2 + \frac{KL(\alpha^*, \alpha^{t}) - KL(\alpha^*, \alpha^{t+1})}{\eta_\alpha} + \eta_\alpha G^4 \right)+ \frac{\eta^2 L G^2}{2} \\
    \Rightarrow \norm{g}^2 &\leq \frac{\Rcal(\theta^t) - \Rcal(\theta^{t+1})}{\eta} + \frac{KL(\alpha^*, \alpha^{t}) - KL(\alpha^*, \alpha^{t+1})}{ \eta_\alpha} + {\eta_\alpha G^4}{} + \frac{\eta LG^2}{2}.
\end{align*}
Summing the above inequality over timesteps $t=1,\cdots,T$, we obtain:
\begin{align*}
    \frac{1}{T} \sum_{t=0}^{T-1} \norm{\nabla \Rcal(\theta^t)}^2 &\leq \frac{\Rcal(\theta^0) - \Rcal(\theta^{T})}{\eta T} + \frac{KL(\alpha^*, \alpha^{0}) - KL(\alpha^*, \alpha^{T})}{\eta_\alpha T} + {\eta_\alpha G^4}{} + \frac{\eta LG^2}{2} \\
    &\leq \frac{2B}{\eta T} + {\eta_\alpha G^4}{} + \frac{\eta LG^2}{2},
\end{align*}
where we used $KL(\alpha^*, \alpha^{t+1}) \geq 0$, $KL(\alpha^*,\alpha^0)=0$ and $\Rcal(\theta^0) - \Rcal(\theta^T) \leq 2B$ in the last step. Using the parameter choices $\eta = 2 \sqrt{\frac{B}{LG^2T}}$ and $\eta_\alpha = \sqrt{\frac{BL}{G^6T}}$, we obtain that:
\begin{align*}
    \frac{1}{T} \sum_{t=0}^{T-1} \norm{\nabla \Rcal(\theta^t)}^2 &\leq 3 \sqrt{\frac{BLG^2}{T}}.
\end{align*}
This proves the theorem.
\end{proof}

\section{More details of the synthetic settings}
\label{appendix:synth}
\begin{table}[htb]
    \centering
    \begin{tabular}{c}
         \includegraphics[width=0.98\linewidth]{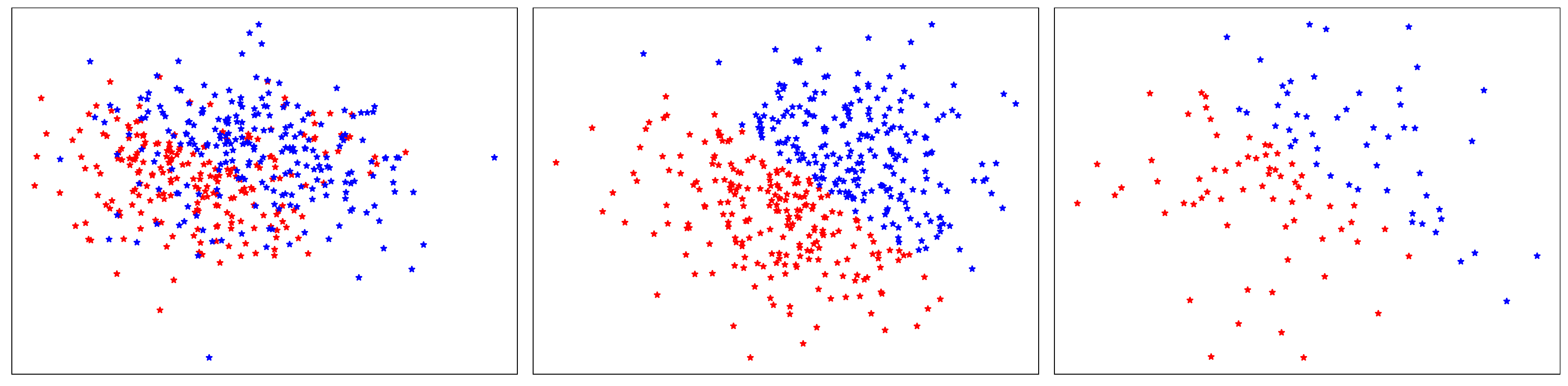}\\
         \includegraphics[width=0.98\linewidth]{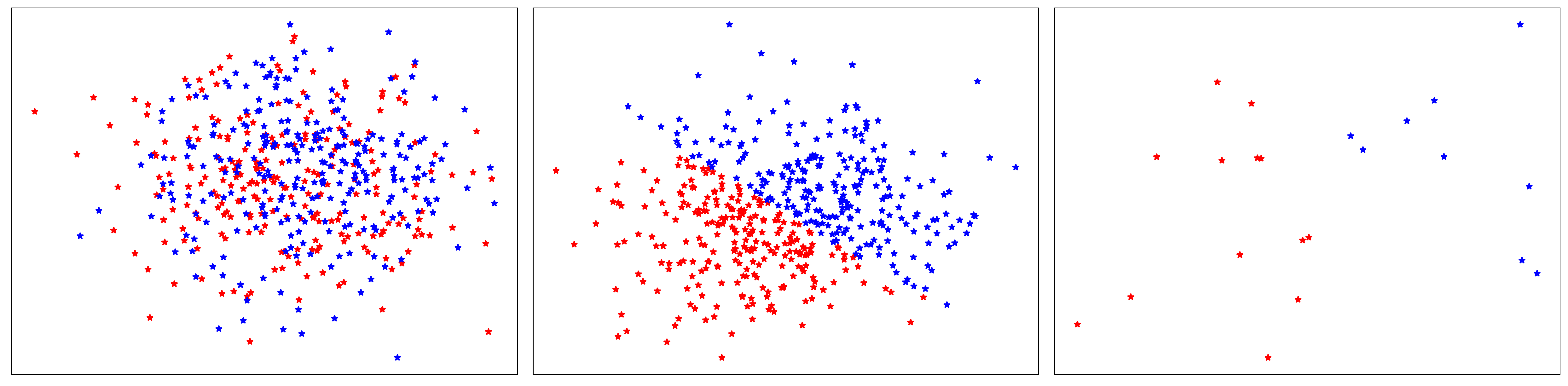}\\
         \includegraphics[width=0.98\linewidth]{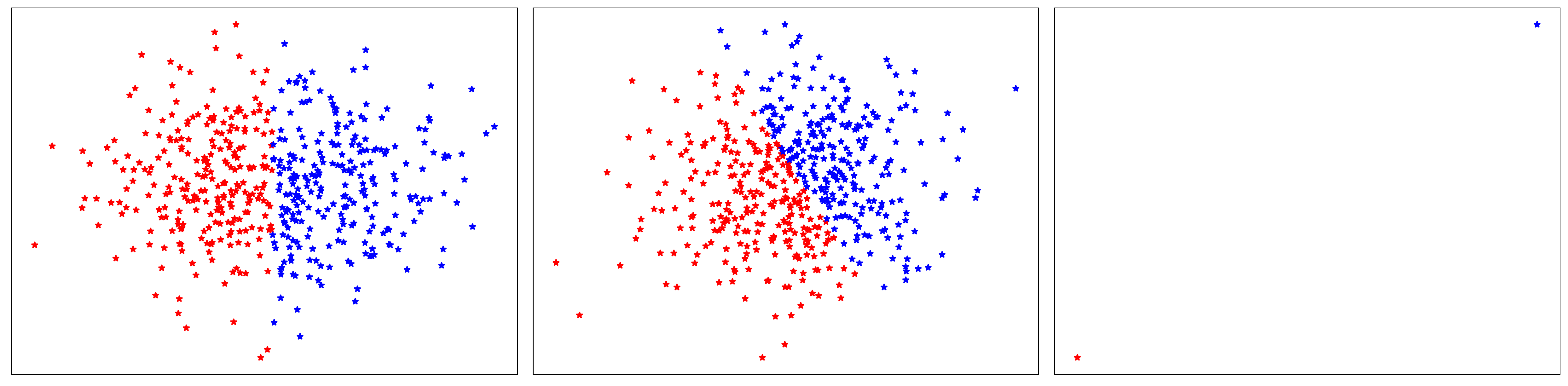}
    \end{tabular}
    \caption{Plot of data from first two features for the three synthetic settings. The three columns show the distribution of examples from the three groups in the order. The rows from top to bottom show the plots for Noisy (Section~\ref{sec:qua:noise}), Spurious (Section~\ref{sec:qua:neg}) and Rotation (Section~\ref{sec:qua:rot}) simple settings. The minority group for Rotation setting (right bottom) has only two examples. We omitted from the plot the third feature of the spurious setting (second row). }
    \label{tab:synth_figures}
\end{table}

In Table~\ref{tab:synth_figures}, we show the data for each group and each simple setting we considered in Section~\ref{sec:qua}. In the noise setting of the first row, the noise is only introduced in one of the groups. For the spurious case of second row, the first two features perform poorly on the first group while the third spurious feature (not shown in the figure) performs poorly on both the second and the third group. Shown in the last row, the classifier is gradually rotated for the rotation case and the minority group is extremely sparse with only two examples.

\section{Qualitative experiments with deep models}
\label{appendix:mnist_qua}
In this section, we repeat the qualitative experiments of Section~\ref{sec:qua} with MNIST examples and deep models. For all the setting below, we train a ResNet-18 model with standard SGD, pick the best hyperparameter based on the validation set, and report the performance on the test set corresponding to the checkpoint performing best on the validation set.  We binarized the label space by grouping the first five digits in to one class and the next five in to another.  In all cases, we create three groups of examples by randomly partitioning the training set into 4,900, 4,900, and 200 examples. We illustrate various aspects of the differences between \dro\ and \cg\ by perturbing instances in a group in three different ways below:

\subsection{Label Noise setup}
\label{appendix:qua:noise}
In this case, we randomly flipped all the labels in the first group, meaning the first group is highly noisy in the training data while the test/validation set is noise-free. We trained a ResNet-18 model on all the three groups with a parameter weight decay of 1. 
Since {\cg} focuses on high transfer groups instead of high loss groups, it is more robust to the presence of a noisy training group when compared with {\dro}. As a result, a model trained with {\cg} has better worst group accuracy when compared with {\dro} (Noise-MNIST column of Table~\ref{tab:mnist}).

\subsection{Uneven inter-group similarity}
\label{appendix:qua:rot}


In this case, we rotate digits in the second and third groups by 30, 60 degrees.
The difference in degree of rotation quantifies the transferability between the groups. The second group transfers better to all the groups compared to the first and third. {\cg}, by design, focuses training on the high-transfer second group more than first and performs better on all the rotations when compared with {\dro} (Rotation-MNIST column of Table~\ref{tab:mnist}).

\subsection{Spurious correlations setup}
Here, on three group setup, we use a color feature to spuriously correlate with label in the first and last group, but not in the second group as follows: Digits in the first group are colored red for label 0 and blue for label 1,   colored blue or red randomly in the second group, and 
colored blue for label 0 and red for label 1 in the third group. 
{\cg} performs better than {\dro} by focusing mostly on the clean and high transfer second group. As a result,  generalizes better than {\dro} (Spurious-MNIST column of Table~\ref{tab:mnist})


\section{Datasets}
\label{appendix:datasets}

{\bf Colored-MNIST:} Using MNIST~\citep{MNIST} digits, we create a dataset where the foreground color  is spuriously correlated with the label. We split 50,000 examples into majority, minority with 1000:1 group size ratio, and binarize the label space to predict digits 0-4, 5-9 as the two classes. Test data has equal proportion of each group.
In the majority group, the foreground is red for examples of label 0 and blue for label 1; the minority group examples were colored in reverse. Color based prediction of the label cannot generalize, however, color is spuriously and strongly correlated with the label in the training data.  Many previous work~\citep{ArjovskyIRM19,Ahmed21} adopted Colored MNIST for probing generalization.  

{\bf WaterBirds} task is to predict if an image contains water-bird or a land-bird and contains two groups: majority, minority in ratio 62:1 in training data. The minority group has reverse background-foreground coupling compared to the majority. Test/validation data has equal proportion of each group. 

{\bf CelebA}~\citep{CelebA} task is to classify a portrait image of a celebrity as blonde/non-blonde, examples are grouped based on the gender of the portrait's subject. The training data has only 1,387 male blonde examples compared to 200K total examples. Test/validation data follows same group-class distribution as train. 

{\bf MultiNLI}~\citep{MultiNLI} task is to classify a pair of sentences as one of {\it entailment, neutral, contradiction}. \citet{Gururangan18} identified that negation words in the second sentence are spuriously correlated with the contradiction labels. Accordingly, examples are grouped based on if the second sentence contains any negation word. Test/validation data follows same group-class distribution as train.


{\bf CivilComments-WILDS}~\citep{CivilComments} task is to classify comments as toxic/non-toxic. Examples come with eight demographic annotations: {\it white, black, gay, muslim}, based on if the comment mentioned terms that are related to the demographic. Label distribution varies across demographics. In the bechmark train examples are grouped on {\it black} demographic, and testing is on the worst group accuracy among all 16 combinations of the binary class label and eight demographics.  

{\bf PovertyMap-WILDS}~\citep{PoverertMap} task is to classify satellite images in to a real valued wealth index of the region. 
The rural and urban sub-population from different countries make the different train groups. Bechmark evaluates on worst-region Pearson correlation between predicted and true wealth index in two settings: An in-domain setting that measures sub-population shift to seen regions, and an out-domain setting that evaluates generalization to new countries.

{\bf FMoW-WILDS}~\citep{FMoW} task is to classify RGB satellite images to one of 62 land use categories.  Land usage differs across countries and evolves over years. Training examples are stratified in to eleven groups based on the year of satellite image acquisition. We report out-domain evaluation using test data from regions of seen countries but from later years, and measures worst accuracy among the five geographical regions: Africa, Americas, Oceania, Asia, and Europe. The in-domain evaluation with worst accuracy on the eleven years is unavailable for other algorithms, so we stick to only out-domain evaluation. 

{\bf Camelyon17-WILDS}~\citep{Camelyon17} is a binary classification task of predicting from a microscope image of a tissue if it contains a tumour. The training data contains scans from three hospitals, and the test data contains scans from multiple unseen hospitals. The evaluation metric is average accuracy on the test set hospitals. 

\section{More Technical Details}
\label{appendix:technical}

{\bf Predictive Group Invariance}~\citep{Ahmed21} proposed an algorithm that was shown to generalize to various kinds of shifts including in-distribution shifts. The algorithm penalizes discrepancy in predictive probability distribution across groups with the same label as shown below. 

$$
\Rcal(\theta)=\sum_i \frac{n_i}{\sum_i n_i} \ell_i + \lambda\sum_c \mathbb{E}_{Q^c}[p_\theta(y\mid x)]\log \frac{\mathbb{E}_{Q^c}[p_\theta(y\mid x)]}{\mathbb{E}_{P^c}[p_\theta(y\mid x)]}
$$

Where `c' is the class label and the rest of terms in the equation follow the notation of this paper. For a given class label `c', the distributions: $Q^c, P^c$, represent the examples of the same label but belong to two different inferred groups. They used an existing work~\citep{Creager21} to stratify the examples in to groups. Here, we used the available group annotations for creating the example splits: $Q^c, P^c$, for a fair comparison. We picked the value of $\lambda$ from \{1e-3, 1e-2, 1e-1, 1\}. 

The original paper noted that the second distribution of the KL divergence is an ``easy'' group such that its average predictive distribution is mostly correct. In our case, since all the groups contain bias, they are all ``easy'', and hence the direction of divergence is unclear. Since {\erm} tends to learn the biases aligned with the majority group, we set the $P^c$ distribution from the majority and $Q^c$ from the minority. 

{\bf \cg}: We model inter-group transfer characteristics using inner products of first order gradients (\eqref{eqn:final_alpha_update}). Since per-group gradient computation for all the parameters is computationally expensive, we use only a subset of parameters. We use only the last three layers for ResNet-50, Densenet-101 and DistilBERT, and all the parameters for gradient computation for any other network. 

\section{Performance at varying levels of heterogeneity}
\label{appendix:hetero}

\begin{wrapfigure}{r}{0.5\textwidth}
  \centering
  \includegraphics[width=\linewidth]{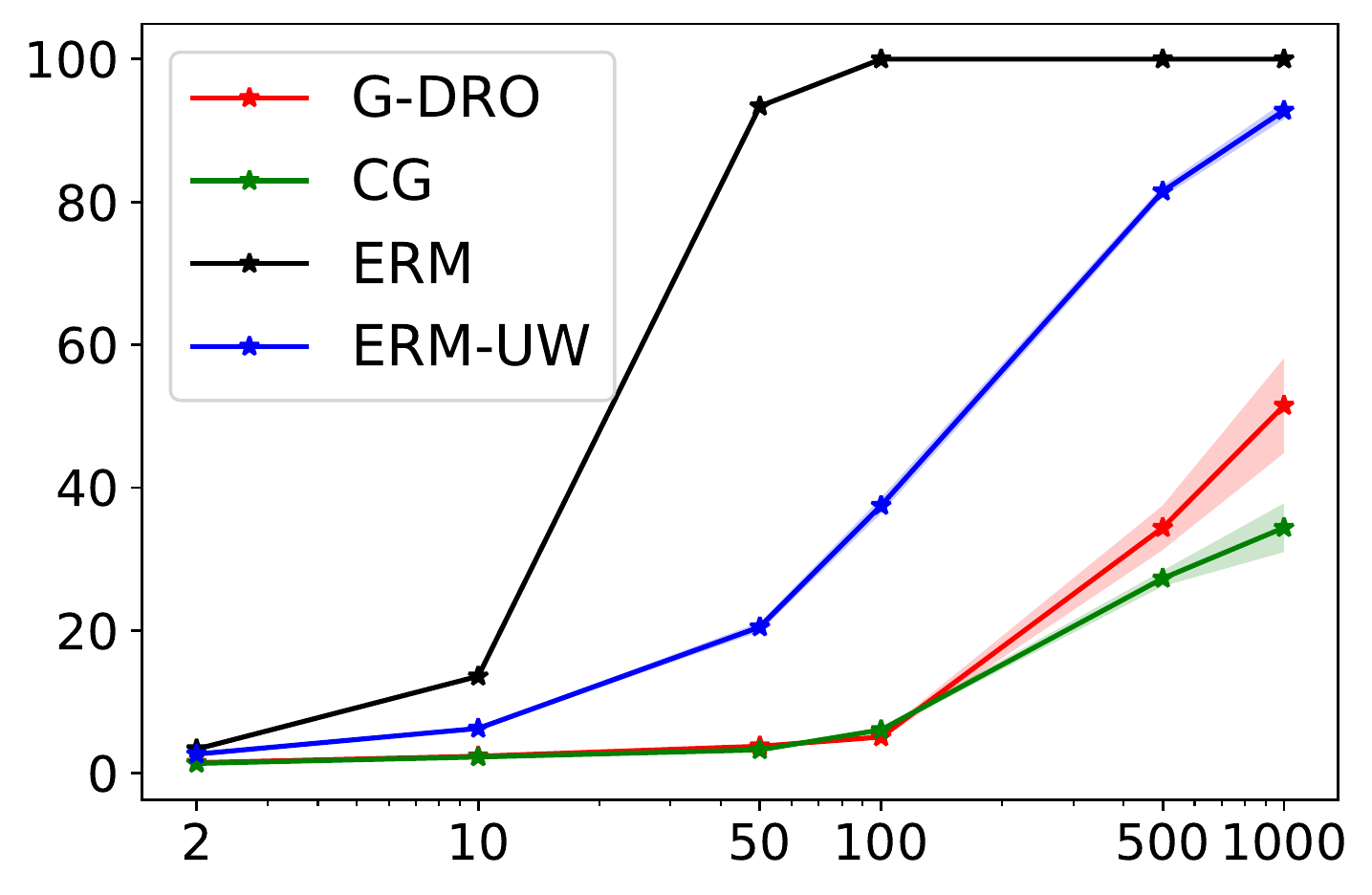}
  \caption{Worst group error rate with increasing ratio of majority to minority. Shaded region shows one standard error. All estimates are aggregated over three runs.}
  \label{fig:cmnistd}
\end{wrapfigure}

We study performance on the simple synthetic setting of Colored-MNIST under varying levels of heterogeneity, i.e. ratio of majority to the minority group. As discussed in the datasets section, ColoredMNIST dataset has two groups: a majority group where the label and the digit's color match and the minority group where they do not match. The examples are grouped in to majority and minority in the ratio of r to 1. As the value of r increases, the net (spurious) correlation of the color with the label increases, furthering the gap between the best and worst groups. The train split has 50,000 examples and validation, test split have 10,000 examples each. The validation, test split contain all the groups in equal proportions. We follow the same optimization procedure as described for the ColoredMNIST as described in the main content, with the exception that we set the weight decay to 0. 

We trained and evaluated different algorithms for the ratio (r) going from 2 to 1000. Figure~\ref{fig:cmnistd} shows the worst-group error rate with the value of r for various algorithms. 
{\cg} performed well across all the values of r, and for the extreme value of r=1000: {\cg}'s worst error rate of 34.4\% is far better than the 51.5\% of {\dro}.

\section{How does synthetic experiments relate to real-world datasets?}
\label{appendix:synth_to_real}
In this section we discuss the similarities between our synthetic settings of Section~\ref{sec:qua} and the real-world datasets~\ref{sec:datasets}. 

The text benchmarks (MultiNLI, CivilComments-WILDS) and CelebA resemble our toy setup of Sec. 4.3. 
In MultiNLI, the examples with negation words have spurious correlations that negatively transfer between the contradiction and entailment labels. The examples from the group with no negation words do not contain such spurious features. 
Similarly, in CivilComments-WILDS the examples from the black group contain spurious correlation (they contain tokens that identify the demographic, which can be easily exploited to classify examples from black group as mostly toxic), while such spurious features are absent in the non-black group. 
In the CelebA dataset too, male non-blonde (majority) negatively transfers to the male blonde (minority) since the classifier may learn to interpret short hair (male) to be non-blonde. On the other hand, the female blond/non-blond groups do not contain any known spurious correlation.
In all the cases, CGD could avoid learning spurious features by focussing training on groups with no or relatively low spurious correlation similar to what was demonstrated in Sec 4.3, thereby learning a more robust solution with dampened strength of spurious features. 

FMoW-WILDS, PovertyMap-WILDS are similar to our label noise simple setup of Sec. 4.1.
FMoW-WILDS task is to classify a satellite image into one of 62 land-use categories. The dataset is annotated with human curators labeling if a marked region in an image contains, say a ``police station''~\citep{PoverertMap}. Depending on the demographic spread of the human curators, the label correctness is expected to vary from one region to another. Also, some land-use categories are far easier to classify than others (for eg. ``police station'' vs ``helipad''). Similarly, PovertyMap-WILDS task is to map a satellite image to its poverty index. Poverty index per region (urban/rural settlement) ground-truth was acquired through secondary sources such as asset index of the region from the national demographic surveys~\citep{FMoW}. The asset to wealth index per region was found to vary per country and hence the quality of the label. The non-uniform label noise of the two datasets is similar to our setup in Sec. 4.1. CGD focuses only on the difficult groups that transfer well to the rest, unlike Group-DRO that only pursue the maximum loss groups.

\end{document}